\newcommand{\kap}{\kappa}
\newcommand{\pc}{\mathbf{P}}
\newcommand{\pp}{\mathbb{P}}
\begin{document}
\setcounter{page}{1}
 
\title{Algebraic Semantics of Generalized RIFs}
\author{\textsf{A Mani}}
\institute{Indian Statistical Institute\\
203, B. T. Road, Kolkata-700108, India\\
Email: \texttt{$a.mani.cms@gmail.com$, $amani.rough@isical.ac.in$}\\
Homepage: \url{https://www.logicamani.in}\\
Orcid: \url{https://orcid.org/0000-0002-0880-1035} }

\maketitle

\begin{abstract}
A number of numeric measures like rough inclusion functions (RIFs) are used in general rough sets and soft computing. But these are often intrusive by definition, and amount to making unjustified assumptions about the data. The contamination problem is also about recognizing the domains of discourses involved in this, specifying errors and reducing data intrusion relative to them. In this research, weak quasi rough inclusion functions (wqRIFs) are generalized to general granular operator spaces with scope for limiting contamination. New algebraic operations are defined over collections of such functions, and are studied by the present author. It is shown by her that the algebras formed by the generalized wqRIFs are ordered hemirings with additional operators. By contrast the generalized rough inclusion functions lack similar structure. This potentially contributes to improving the selection (possibly automatic) of such functions, training methods, and reducing contamination (and data intrusion) in applications. The underlying framework and associated concepts are explained in some detail as they are relatively new. 
\end{abstract}

\keywords{Rough Sets, Generalized Rough Inclusion Functions, wqRIF Algebra, High Granular operator Spaces, Contamination Problem, Rough Mereology, GGS, Non-Intrusive Analysis}

\section{Introduction}

If $A, B\in\mathcal{S}\subseteq \wp(S)$, with $\mathcal{S}$ being closed under intersection, $S$ being a finite set and if $\# ()$ is the cardinality function, then the quantity 
\begin{equation*}\label{rif0}
\nu(A, B) = \left\lbrace  \begin{array}{ll}
 \dfrac{\# (A\cap B)}{\# (A)} & \text{if } A\neq \emptyset\\
 1 & \text{if } A= \emptyset\\
 \end{array} \right. \tag{K0}                                                                                                             
\end{equation*}
can be interpreted in multiple ways (and therefore associated with distinct ontologies) including as rough inclusion function, conditional subjective probability, relative degree of misclassification, majority inclusion function, and inclusion degree. In this it is possible to replace \emph{intersections} with commonality operations that need not be idempotent, commutative or even associative. Many generalizations of this function are known in the rough set, belief theory, subjective probability, fuzzy set and ML literature. It leads to ideas of concepts being close or similar to each other in the contexts of rough sets -- but subject to a number of hidden conditions. At the same time this is at conflict with concepts of association reducts in rough sets \cite{sdrskt06,amst96}. A major problem with such functions is that they are not dependent on granules (in multiple perspectives) \cite{am501,am240}. But solving this is bound to be a very complicated matter in any perspective. Abstract granular frameworks of general rough sets are used to address the issue in this research.

Data analysis maybe intrusive (invasive) or non-intrusive relative to the assumptions made on the dataset used in question \cite{gdu}. Non-invasive data analysis was defined in a vague way in \cite{gdu} as one that 
\begin{itemize}
\item {is based on the idea of \emph{data speaking for themselves},}
\item {uses minimal model assumptions by drawing all parameters from the observed data, and}
\item {admits ignorance when no conclusion can be drawn from the data at hand.}
\end{itemize}
Key procedures deemed to be non-invasive are data discretization (or horizontal compression), randomization procedures, reducts of various kinds within rough set data analysis, and rule discovery with the aid of maximum entropy principles. In most situations, general rough inclusion functions are also source of data intrusion and contamination (which is about using assumptions of one domain in another during modeling). New methods of reducing invasive procedures and contamination are a consequence of this research.

Very general variants of granular operator spaces (specifically GGS and specific versions thereof) \cite{am5559,am501,am240,am9114} are used as the basic framework. Ideas of contamination are explained in brief and an application to possible soft walking aids is also discussed. Weak quasi rough inclusion functions are generalized and studied from an algebraic perspective. Potential application to model selection, decision-making and cluster validity (in the context of \cite{am2021c}) are also indicated by the present author.

The paper is arranged as follows. In the following section, some of the necessary background is defined for convenience. The concept of contamination and data intrusion is discussed in the third section. Variants of granular operator spaces are explained  in the next. Generalized rough inclusions of various types are defined and basic properties are proved in the fifth. Admissible algebraic operations and systems over generalized wqRIFs and RIFs are investigated in the sixth section.    
 
\section{Background and Notation}\label{bck}

For the basics of rough sets, the reader is referred to \cite{ppm2,lp2011}. The granular approach due to the present author can be found in \cite{am5559,am501,am240,am9114}. From a set-theoretic perspective, \emph{granulations} are subsets of a powerset that satisfies some conditions. In \cite{tyl}, a model of granulation (GrC model) is defined as a collection of objects of a category along with a set of n-ary relations (subobject of a product object). The present usage is obviously distinct (though related).

\emph{Throughout the paper, quantification is enclosed in braces for easier reading;
\[(\forall a_1,\ldots , a_n) \text{ is the same as } \forall a_1,\ldots ,a_n\]}.

Information tables are structured forms of data. Formally, an \emph{information table} $\mathcal{I}$, is a relational system of the form \[\mathcal{I}\,=\, \left\langle \mathfrak{O},\, \mathbb{A},\, \{V_{a} :\, a\in \mathbb{A}\},\, \{f_{a} :\, a\in \mathbb{A}\}  \right\rangle \]
with $\mathfrak{O}$, $\mathbb{A}$ and $V_{a}$ being respectively sets of \emph{Objects}, \emph{Attributes} and \emph{Values} respectively.
$f_a \,:\, \mathfrak{O} \longmapsto \wp (V_{a})$ being the valuation map associated with attribute $a\in \mathbb{A}$. Values may also be denoted by the binary function $\nu : \mathbb{A} \times \mathfrak{O} \longmapsto \wp{(V)} $ defined by for any $a\in \mathbb{A}$ and $x\in \mathfrak{O}$, $\nu(a, x) = f_a (x)$.

Relations may be derived from information tables by way of conditions of the following form: For $x,\, w\,\in\, \mathfrak{O} $ and $B\,\subseteq\, \mathbb{A} $, $(x,\,w)\,\in\, \sigma $ if and only if $(\mathbf{Q} a, b\in B)\, \Phi(\nu(a,\,x),\, \nu (b,\, w),) $ for some quantifier $\mathbf{Q}$ and formula $\Phi$. The relational system $S = \left\langle \underline{S}, \sigma \right\rangle$ (with $\underline{S} = \mathbb{A}$) is said to be a \emph{general approximation space}. 

If $\sigma$ is an equivalence as defined by the condition Equation \ref{pawl} then $S$ is said to be an \emph{approximation space}.
\begin{equation*}\label{pawl}
(x, w)\in \sigma \text{ if and only if } (\forall a\in B)\, \nu(a,\,x)\,=\, \nu (a,\, w) 
\end{equation*}

In classical rough sets, on the power set $\wp (S)$, lower and upper
approximations of a subset $A\in \wp (S)$ operators, apart from the usual Boolean operations, are defined as per: 
\[A^l = \bigcup_{[x]\subseteq A} [x] \; ; \; A^{u} = \bigcup_{[x]\cap A\neq \varnothing } [x],\,\]
with $[x]$ being the equivalence class generated by $x\in S$. If $A, B\in \wp (S)$, then $A$ is said to be \emph{roughly included} in $B$ $(A\sqsubseteq B)$ if and only if $A^l \subseteq B^l$ and $A^u\subseteq B^u$. $A$ is roughly equal to $B$ ($A\approx B$) if and only if $A\sqsubseteq B$ and $B\sqsubseteq A$. The positive, negative and boundary region determined by a subset $A$ are respectively $A^l$, $A^{uc}$ and $A^{u}\setminus A^l$ respectively.

Boolean algebra with approximation operators constitutes a semantics for classical rough sets (though not satisfactory). This continues to be true even when $R$ in the approximation space is replaced by any binary relation. More generally it is possible to replace $\wp (S)$ by some set with a part-hood relation and some approximation operators defined on it \cite{am240}. The associated semantic domain in the sense of a collection of restrictions on possible
objects, predicates, constants, functions and low level operations on those is 
referred to as the classical semantic domain for general rough sets. In contrast, the semantic domain associated with sets of roughly equivalent or relatively indiscernible objects with
respect to this domain is a \emph{rough semantic domain}. Other semantic domains, including hybrid semantic domains, can be generated and have been used often (for example in choice-inclusive semantics \cite{am99}). Many times such use is not explicitly mentioned or is referred to as \emph{meta levels} in the AI literature.  

\subsection{Granules and Granulations}

Granules or information granules are often the minimal discernible concepts that can be used to construct all relatively crisp complex concepts in a vague reasoning context. Such constructions typically depend on substantial assumptions made by the framework employed in question \cite{am5559,am240,am501,am9411,am9969,tyl}. Major granular computing approaches can be classified into 
\begin{itemize}
\item {Primitive Granular Computing Paradigm: PGCP (see \cite{am501})}
\item {Classical Granular Computing Paradigms: CGCP including precision based approaches, GrC model approach \cite{tyl}, and definite object approaches \cite{yzm2012,hmy2019}.}
\item {Axiomatic Granular Computing Paradigm: AGCP due to the present author}
\end{itemize}

In the present author's axiomatic approach to granularity \cite{am5559,am240,am9114,am9411,am501,am3930,am3600}, fundamental ideas of non-intrusive data analysis have been critically examined and methods for reducing contamination of data (through external assumptions) have been proposed. The need to avoid over-simplistic constructs like rough membership and inclusion functions have been stressed in the approach by her.

\subsection{Algebraic Concepts}

A \emph{semiring} is an algebra of the form $A = \left\langle \underline{A},\, +, \, \cdot,\, 0  \right\rangle$ with $\underline{A}$ being a set,   $ + $ being a commutative monoidal operation with unit element $0$, and $ \cdot$ being an associative operation that satisfies the following distributivity conditions:
\begin{align*}
(\forall a, b, c) a\cdot (b+c) = (a\cdot b) + (a\cdot c)    \tag{l-distributivity}\\
(\forall a, b, c) (b+c)\cdot a = (b\cdot a) + (c\cdot a)    \tag{r-distributivity}
\end{align*}
A \emph{hemiring} is an algebra of the same type without the unit element $0$.

For basics of partial algebras, the reader is referred to \cite{bu,lj}.
\begin{definition}
A \emph{partial algebra} $P$ is a tuple of the form \[\left\langle\underline{P},\,f_{1},\,f_{2},\,\ldots ,\, f_{n}, (r_{1},\,\ldots ,\,r_{n} )\right\rangle\] with $\underline{P}$ being a set, $f_{i}$'s being partial function symbols of arity $r_{i}$. The interpretation of $f_{i}$ on the set $\underline{P}$ should be denoted by $f_{i}^{\underline{P}}$, but the superscript will be dropped in this paper as the application contexts are simple enough. If predicate symbols enter into the signature, then $P$ is termed a \emph{partial algebraic system}.   
\end{definition}

In this paragraph the terms are not interpreted. For two terms $s,\,t$, $s\,\stackrel{\omega}{=}\,t$ shall mean, if both sides are defined then the two terms are equal (the quantification is implicit). $\stackrel{\omega}{=}$ is the same as the existence equality (also written as $\stackrel{e}{=}$) in the present paper. $s\,\stackrel{\omega ^*}{=}\,t$ shall mean if either side is defined, then the other is and the two sides are equal (the quantification is implicit). Note that the latter equality can be defined in terms of the former as 
\[(s\,\stackrel{\omega}{=}\,s \, \longrightarrow \, s\,\stackrel{\omega}{=} t)\&\,(t\,\stackrel{\omega}{=}\,t \, \longrightarrow \, s\,\stackrel{\omega}{=} t) \]

\subsection{Meaning of Terms}

This list is to help with reading about general rough sets in the frameworks used. Motivations can be found all over the rough set literature (see \cite{am501}).

\begin{itemize}
\item {\textsf{Crisp Object}:  That which has been designated as \emph{crisp} or is an approximation of some other object.}
\item {\textsf{Vague Object}: an object that differs from its approximations.}
\item {\textsf{Discernible Object}: That which is available for computations in a rough semantic domain (in a contamination avoidance perspective). }
\item {\textsf{Rough Object}: Many definitions and representations are possible relative to the context. From the representation point of view these are usually functions of definite or crisp objects.}
\item {\textsf{Definite Object}: An object that is invariant relative to an approximation process. In actual semantics a number of concepts of definiteness is possible. In some approaches, as in \cite{yzm2012,hmy2019}, these are taken as granules. Related theory has a direct connection with closure algebras and operators as indicated in \cite{am501}.}
\end{itemize}

\section{Contamination and Intrusive Analysis}

The concept of contamination is explained abstractly and with a concrete example. Generalized versions of RIFs would be a good choice in the context of the example.

Some of the principles used in this research can be nicely summarized:

Let an interpretation of object $a_i$ in domain $\mathfrak{Z}$ be $b_i$ in domain $\mathfrak{W}$ for each $i\in \{1, 2, \ldots n\}$ ($n$ being a finite positive integer), and $\phi: \mathbb{S} \longmapsto Q$ a partial function from the collection of objects to the rationals, then  
\begin{description}
\item [Contaminated version-1:]{a measure of the form\\ $f(a_1,\ldots a_n) = t( \phi(a_1), \ldots, \phi(a_n))$ (whenever defined) to denote an aspect the state of affairs in $\mathfrak{W}$ relative to $\mathfrak{Z}$}
\item [Less-Contaminated version-1:]{a measure of the form\\ $f(a_1,\ldots a_n) = t( \phi(b_1), \ldots, \phi(b_n))$ (whenever defined) to denote an aspect of the state of affairs in $\mathfrak{W}$ relative to $\mathfrak{Z}$}
\end{description}

But the above is typically hindered by the need to use multiple approximations (like a pair of lower and upper approximations). So $f$ may need to be replaced by other alternative constructs that capture the essence of the generalized reasoning.

Let an interpretation of object $a_i$ in domain $\mathfrak{Z}$ be $b_{i_1}, b_{i_2},\ldots, \, b_{i_r} $ in domain $\mathfrak{W}$ for each $i\in \{1, 2, \ldots n\}$ ($n$ and $r$ being finite positive integers), and $\phi: \mathbb{S} \longmapsto Q$ a partial function from the collection of objects to the rationals, then  
\begin{description}
 \item [Contaminated version-2:]{a measure of the form\\ $f(a_1,\ldots a_n) = t( \phi(a_1), \ldots, \phi(a_n))$ (whenever defined) to denote an aspect of the state of affairs in $\mathfrak{W}$ relative to $\mathfrak{Z}$}
 \item [Less-Contaminated version-2:]{a measure of the form\\ $f(a_1,\ldots a_n) = \xi(t( \phi(b_{1_1}), \ldots, \phi(b_{n_1})),\ldots, t( \phi(b_{1_r}), \ldots, \phi(b_{n_r}))  )$ (whenever defined) to denote an aspect of the state of affairs in the interpretation of $\mathfrak{W}$ relative to $\mathfrak{Z}$. $\xi$ may be a morphism or map $R^r \longmapsto R^k$ for some positive integer $k \leq r$.}
\end{description}

In general, contamination does not have a simple form. The following is a nice but very dense way of expressing it.

\begin{description}
 \item [Contaminated:]{a set of formulas $X\cup V$ is used in domain $\mathfrak{Z}$ to model some phenomena in $\mathfrak{W}$ but of these $V$ is not reasonable in domain $\mathfrak{W}$ because they can be disproved in some instances at least (though $V$ may be apparently intuitively justified in some other cases).}
 \item [Less-Contaminated:]{Avoid using $V$ or its unjustified consequences in the modeling and do it in a domain-aware way.}
\end{description}

\subsection{Soft Aids for Pedestrians}\label{pedn}

While autonomous cars are likely to see some adoption, the related question of helping humans walk better through soft methods remains largely unexplored. \emph{In fact, the latter problem may be far more formidable for AI scientists}. There is some literature on improving the alertness of mobile users during walking. But the problem of designing soft aids to improve human navigation (especially for walking on streets) is hindered by contamination because a pedestrian's perspective is too complex. Further people do not develop uniform skill sets in response to adverse conditions. Aspects of this problem are explained below.

In many unplanned cities and towns, people on foot may share the roads (often in a bad state) with vehicles because of the absence of proper footpaths and driving lanes. For example, there are such roads close to the present author's institute. The alertness level of people and vehicle drivers needs to be much higher than on dedicated pathways. Further they need to do real-time decision-making about their current path. This is determined by factors such as approaching vehicles, pedestrians, condition of road, movements, and other objects on the side of the road among others. People walking on such roads are likely to develop a large number of techniques that are not commonly used by those used to walk only on optimized spacious footpaths. Almost all are likely to move at slower speeds, and stop frequently for oncoming traffic.

Suppose that a device can collect and process information with a number of sensors placed by such a roadside. This can, for example, be done to light a path on the road or to alert pedestrians. Now at least five semantic domains can be associated with the scenario (and in each of these again multiple models can be formally defined):
\begin{itemize}
\item {$\mathfrak{W}$: Semantic Domain relative to a typical pedestrian}
\item {$\mathfrak{S}$: Semantic Domain relative to the AI system}
\item {$\mathfrak{SV}$: Semantic Domain relative to an empowered moving vehicle with sensors}
\item {$\mathfrak{V}$: Semantic Domain relative to a ordinary vehicle }
\item {$\mathfrak{W+}$: Semantic Domain relative to a AI-enhanced pedestrian}
\end{itemize}

Concepts expressed in one semantic domain may not be available elsewhere. Keeping them separate helps because pedestrians typically use much smaller subsets of information in real time to decide their next moves. If the road condition is also bad, then suboptimal decisions may be taken. To be more specific, from all available data it may be deducible through a rough set approach that a pedestrian's left feet should be placed on some jagged region. But the person's decision may be to place it on a shallower puddle of water instead of a deeper one, and this may be a good rough approximation in the context. The former solution is contaminated, and the latter is less contaminated. A pedestrian may believe in her solution to a greater extent than on the other, and the superiority of her own set of moves (that are unlikely to be captured by the modeling mechanism). Further, actions based on unjustified solutions are likely to induce anxiety that can have bad consequences. 

\emph{This example shows that reducing contamination is important in contexts that warrant it}.
It is of course, possible to work with a single semantic domain instead, compare multiple models for a purpose, and get intractable solutions.

\subsection{Contamination and Granular Operator Spaces}\label{cav}

Rough sets may be a viewed as a plural discipline in which diverse theoretical and practical approaches are used. In the former, it is necessary to select a few of the following assumptions or actionable items :
\begin{itemize}
\item[RT1]{Decide to proceed from concrete information about approximations derived from data tables to the abstract.}
\item[RT2]{Decide to proceed from concrete or abstract information about approximations.}
\item[RT3]{Assume that semantic models should concern all available information (in the classical semantic domain).}
\item[RT4]{Assume that a negation operation exists and duality between lower and upper approximations holds.}
\item[RT5]{Assume that semantic models should concern only those information available in the domain of rough reasoning (rough semantic domain).}
\item[RT6]{Assume that semantic models should concern only those information available in a domain of reasoning.}
\item[GT1]{Decide to use the axiomatic framework of granularity(AGCP) due to the present author \cite{am501,am240}.}
\item[GT01]{Decide to use the restricted frameworks of granularity afforded by cover based rough sets (an interpretation of the GrC model).} 
\item[GT02]{Decide to use the precision based framework (CGCP) of granularity.}
\item[GT03]{Decide to use the idea that all definite objects and their aggregation are granules.}
\item[GT2]{Decide on concrete or abstract definition of\\ approximations. Concrete approximations being those approximations representable in terms of attributes, points, other operations and collectivizing strategies available in the context, while abstract approximations are approximations satisfying a set of properties alone. Algebraic approximations are typically a subset of concrete approximations.}
\end{itemize}

The combinations \textsf{RT1, RT3, RT4} and restriction to mostly pointwise approximations leads to modal  and other logics. In this granularity followed is \textsf{GT01}, though it does not always play a direct role as far as representation of approximations are concerned. In cover based rough sets (see \cite{cptrs19,pp2018,ppm2} in particular), a number of representations corresponding to formulas do happen. Most semantics in these contexts correspond to the classical domain. \textsf{GT03} is an approach suggested in \cite{yzm2012,hmy2019} -- the idea that all granules should be definite objects has a long history, but requiring the converse falls within debatable territory.

In a number of papers including \cite{cc5,gcd2018,cd3,gc2018}, the combination \textsf{RT2, RT3, RT4} has been studied without additional assumptions about granularity. The entire reasoning process modeled is in a classical domain.

A substantial part of the present author's work (especially \cite{am240,am501,am6900,am9969,am9204,am3600,am3930,am1800,am6999}) falls under the assumptions \textsf{RT2, RT5, GT1, GT2}. In her work, the axiomatic approach to granularity, the abstract frameworks of granular operator spaces and a minimalist approach corresponding to contamination avoidance have played a central role. Granular operator spaces and variants are essentially one set theoretic implementation of the philosophical considerations underlying the general mereology-based framework of rough Y-systems  \cite{am240}. All variants of granular operators spaces and higher versions are intended to capture scenarios involving some collections of sets or objects, restricted by parthood, other orders, and abstract approximations restricted by granulations. Higher versions can be represented as partial or total algebras, while lower versions cannot be so represented. Despite the difference, they are intended to model similar scenarios. \emph{So effectively, all granular operator spaces can be related to Pre-GGS considered in Section \ref{varggs}}.

Granular operator spaces and variants (specifically high granular operator spaces) have the following features:
\begin{itemize}
\item {They adhere to the weak definitions of granularity as per the axiomatic granular approach,}
\item {They do not assume a negation operation,}
\item {Their universe maybe a collection of rough objects (in some sense), or a mix of rough and non rough objects or even a collection of all objects involved,}
\item {The sense of parthood between objects is assumed to be distinct from other order relations,}
\item {Realistic partial aggregation and commonality operations are permitted, and}
\item {Numeric simplified measures are not assumed in general.}
\end{itemize}
These features are motivated by properties satisfied by models in real reasoning contexts, and helps in avoiding contamination to a substantial extent. In addition, it should be noted that the problem of avoiding contamination requires pro-active methods of solution. The other two theoretical approaches mentioned do not have all the features mentioned and the application contexts/intents are different. These will be taken up in a separate paper. 

In practical applications to feature selection and computing reducts, suitability of a procedure used depends on the application context itself. As mentioned earlier, a procedure $X$ may be less contaminated than $Z$ in general, but its utility value is dictated by feasibility and cost of the computations associated and subjective relevance of $X$.

\section{Variants of Granular Operator Spaces}\label{varggs}

Granular operator spaces, variants thereof and related partial algebras are abstract constructs that can express 
\begin{description}
\item[Mer] {part of relations between objects,}
\item[App] {approximations of objects,}
\item[Ord] {generalized orders between objects, }
\item[Gra] {granular properties of approximations, and}
\item[Oth] {other rough-set theoretic properties}
\end{description}

The main advantage over other abstractions is that granules are directly part of the discourse and the distinction between \textsf{Mer} and \textsf{Ord} is maintained. Further the conditions on approximations are minimalist. As many as six variants of such spaces have been defined by the present author -- these can be viewed as special cases of a set theoretic and a relation-theoretic abstraction with abstract operations from a category theory perspective. As proved in \cite{am5550,am5559}, higher order versions are partial algebraic systems, the \emph{space} in the terminology is because of mathematical usage conventions. 

Motivations for the present approach relate to issues of simplifying rough Y-systems (RYS) \cite{am240} to purely set theoretic contexts which in turn were motivated by the need to accommodate granulations and simultaneously generalize abstract approaches to rough sets \cite{it2,cd3,gc2018} without superfluous assumptions. But over time, the level of abstraction has evolved to cover more ground beyond general rough and fuzzy set theories. In the literature on mereology \cite{av,vie,ur,rgac15,am3930,seibtj2015}, it is argued that most ideas of binary \emph{part of} relations in human reasoning are at least anti-symmetric and reflexive.  \emph{A major reason for not requiring transitivity of the parthood relation is because of the functional reasons that lead to its failure} (see \cite{seibtj2015}), and to accommodate \emph{apparent parthood} \cite{am9969}. In the context of approximate reasoning interjected with subjective or pseudo-quantitative degrees, transitivity is again not common. The role of such parthoods in higher order approaches are distinctly different from theirs in lower order approaches -- specifically, general approximation spaces of the form $S$ mentioned above with $R$ being a parthood relation are also of interest.

In a \emph{high general granular operator space} (\textsf{GGS}), defined below, aggregation and co-aggregation operations ($\vee, \,\wedge$) are conceptually separated from the binary parthood $\pc$), and a basic partial order relation ($\leq$). Parthood is assumed to be reflexive and antisymmetric. It may satisfy additional generalized transitivity conditions in many contexts. Real-life information processing often involves many non-evaluated instances of aggregations (disjunctions), co-aggregation (conjunctions) and implications because of laziness or supporting meta data or for other reasons  -- this justifies the use of partial operations. Specific versions of a \textsf{GGS} and granular operator spaces have been studied in \cite{am501} by the present author for handling a very large spectrum of rough set contexts. \textsf{GGS} has the ability to handle adaptive situations as in \cite{skaj2016,skajsd2016} through special morphisms -- this is again harder to express without partial operations.  

The underlying set $\underline{\mathbb{S}}$ can be a set of sets of attributes, but this interpretation is not compulsory. In actual practice, \textsf{the set of all attributes in a context need not be known exactly to the reasoning agent constructing the approximations. The element $\top$ may be omitted in these situations or the issue can be managed through restrictions on the granulation}. While the elements of $\mathbb{S}$ can have many meanings,

\begin{definition}\label{gfsg}

A \emph{High General Granular Operator Space} (\textsf{GGS}) $\mathbb{S}$ is a partial algebraic system  of the form $\mathbb{S} \, =\, \left\langle \underline{\mathbb{S}}, \gamma, l , u, \pc, \leq , \vee,  \wedge, \bot, \top \right\rangle$ with $\underline{\mathbb{S}}$ being a set, $\gamma$ being a unary predicate that determines $\mathcal{G}$ (by the condition $\gamma x$ if and only if $x\in \mathcal{G}$) 
an \emph{admissible granulation}(defined below) for $\mathbb{S}$ and $l, u$ being operators $:\underline{\mathbb{S}}\longmapsto \underline{\mathbb{S}}$ satisfying the following ($\underline{\mathbb{S}}$ is replaced with $\mathbb{S}$ if clear from the context. $\vee$ and $\wedge$ are idempotent partial operations and $\pc$ is a binary predicate. Further $\gamma x$ will be replaced by $x \in \mathcal{G}$ for convenience.):

\begin{align*}
(\forall x) \pc xx \tag{PT1}\\
(\forall x, b) (\pc xb \, \&\, \pc bx \longrightarrow x = b) \tag{PT2}\\
(\forall a, b) a\vee b \stackrel{\omega}{=} b\vee a \;;\; (\forall a, b) a\wedge b \stackrel{\omega}{=} b\wedge a \tag{G1}\\
(\forall a, b) (a\vee b) \wedge a \stackrel{\omega}{=} a \; ;\; (\forall a, b) (a\wedge b) \vee a \stackrel{\omega}{=} a \tag{G2}\\
(\forall a, b, c) (a\wedge b) \vee c \stackrel{\omega}{=} (a\vee c) \wedge (b\vee c) \tag{G3}\\
(\forall a, b, c) (a\vee b) \wedge c \stackrel{\omega}{=} (a\wedge c) \vee  (b\wedge c) \tag{G4}\\
(\forall a, b) (a\leq b \leftrightarrow a\vee b = b \,\leftrightarrow\, a\wedge b = a  ) \tag{G5}\\
(\forall a \in \mathbb{S})\,  \pc a^l  a\,\&\,a^{ll}\, =\,a^l \,\&\, \pc a^{u}  a^{uu}  \tag{UL1}\\
(\forall a, b \in \mathbb{S}) (\pc a b \longrightarrow \pc a^l b^l \,\&\,\pc a^u  b^u) \tag{UL2}\\
\bot^l\, =\, \bot \,\&\, \bot^u\, =\, \bot \,\&\, \pc \top^{l} \top \,\&\,  \pc \top^{u} \top  \tag{UL3}\\
(\forall a \in \mathbb{S})\, \pc \bot a \,\&\, \pc a \top    \tag{TB}
\end{align*}

Let $\pp$ stand for proper parthood, defined via $\pp ab$ if and only if $\pc ab \,\&\,\neg \pc ba$). A granulation is said to be admissible if there exists a term operation $t$ formed from the weak lattice operations such that the following three conditions hold:
\begin{align*}
(\forall x \exists
x_{1},\ldots x_{r}\in \mathcal{G})\, t(x_{1},\,x_{2}, \ldots \,x_{r})=x^{l} \\
\tag{Weak RA, WRA} \mathrm{and}\: (\forall x)\,(\exists
x_{1},\,\ldots\,x_{r}\in \mathcal{G})\,t(x_{1},\,x_{2}, \ldots \,x_{r}) =
x^{u},\\
\tag{Lower Stability, LS}{(\forall a \in
\mathcal{G})(\forall {x\in \underline{\mathbb{S}}) })\, ( \pc ax\,\longrightarrow\, \pc ax^{l}),}\\
\tag{Full Underlap, FU}{(\forall
x,\,a \in\mathcal{G})(\exists
z\in \underline{\mathbb{S}}) )\, \pp xz,\,\&\,\pp az\,\&\,z^{l}\, =\,z^{u}\, =\,z,}
\end{align*}
\end{definition}
\emph{The conditions defining admissible granulations mean that every approximation is somehow representable by granules in a algebraic way, that every granule coincides with its lower approximation (granules are lower definite), and that all pairs of distinct granules are part of definite objects (those that coincide with their own lower and upper approximations).} Special cases of the above are defined next. For details, the reader is referred to \cite{am5586,am501}.

\begin{definition}
\begin{itemize}
\item {In a \textsf{GGS}, if the parthood is defined by $\pc ab$ if and only if $a \leq b$ then the \textsf{GGS} is said to be a \emph{high granular operator space} \textsf{GS}.}
\item {A \emph{higher granular operator space} (\textsf{HGOS}) $\mathbb{S}$ is a \textsf{GS} in which the lattice operations are total.}
\item {In a higher granular operator space, if the lattice operations are set theoretic union and intersection, then the \textsf{HGOS} will be said to be a \emph{set HGOS}.}
\end{itemize}
\end{definition}

\begin{definition}
In the context of Def. \ref{gfsg} if additional lower and upper approximation operations are present in the signature, then  the system will be referred to as an \emph{enhanced} \textsf{GGS} (\textsf{EGGS}. 
\end{definition}

Under certain conditions, partial or total groupoid operations can correspond to binary relations on a set. Related algebraic results are used by the present author (see \cite{am5550}) to rewrite the GGS as partial algebras \cite{am5550}. These results can be used to improve the axiomatic frameworks used.

\subsection{Examples of GGS}

In general, \textsf{GGS} cannot be used to formalize approaches to rough sets that are based on non granular approximations, and so it is not a framework for everything. 
 
A general definition of \emph{point-wise approximations} can be proposed in Second Order Predicate Logic(SOPL) (or alternatively, in a fixed language) based on the following loose SOPL version :  If $S$ is an algebraic system of type $\tau$ and $\nu: S \longmapsto \wp (S)$ is a neighborhood map on the universe $S$, then a \emph{point-wise approximation} $*$ of a subset $X\subseteq S$ is a self-map on $\wp(S)$ that is definable in the form: 
\begin{equation}
X^*\, =\,\{x: \, x\in H\subseteq S \, \& \, \Phi(\nu(x), X) \} 
\end{equation}
for some formula $\Phi(A, B)$ with $A, B \in \wp(S)$. In classical rough sets point-wise approximations lead to a granular semantics, but in other cases they do not in general. 

The full generality implicit in a \textsf{GGS} is not usually required for expressing most granular rough set approaches. So in the following example - this aspect is targeted.

\begin{example}
Suppose the problem at hand is to represent the knowledge of a specialist in automobile engineering and production lines in relation to a database of cars, car parts, calibrated motion videos of cars and performance statistics. The database is known to include a number of experimental car models and some sets of cars have model names, or engines or other crucial characteristics associated. 

Let $\underline{\mathbb{S}}$ be the set of cars, some subsets of cars, sets of internal parts and components of many cars. $\mathcal{G}$ be the set of internal parts and components of many cars. Further let 
\begin{itemize}
\item {$\pc a b$ express the relation that $a$ is a possible component of $b$ or that $a$ belongs to the set of cars indicated by $b$ or that   }
\item {$a \leq b$ indicate that $b$ is a better car than $a$ relative to a certain fixed set of features,}
\item {$a^l$ indicate the closest standard car model whose features are all included in $a$ or set of components that are included in $a$, }
\item {$a^u $ indicate the closest standard car model whose features are all included by $a$ or fusion of set of components that include $a$}
\item {$\vee$, $\wedge$ can be defined as partial operations, while $\bot$ and $\top$ can be specified in terms of attributes. }
\end{itemize}
Under the conditions, \[\mathbb{S} \, =\, \left\langle \underline{\mathbb{S}}, \mathcal{G}, l , u, \pc, \leq , \vee,  \wedge, \bot, \top \right\rangle\] forms a \textsf{GGS}.

Suppose that the specialist has updated her knowledge over time, then this transformation can be expressed with the help of morphisms from a \textsf{GGS} to itself. 
\end{example}

\subsubsection*{Abstract Example}

Let \begin{multline*}
\underline{\mathbb{S}} = \{\emptyset , \{a, b, c, e\}, \{a, b, e\}, \{b, c, e\}, \{a, b\}, \{b, e\}, \{b, c\}, \{a\}, \{e\} \}     
\end{multline*}

The  elements of $\underline{\mathbb{S}}$ can be read as collections of attributes. Further, let

\begin{align*}
\top =\{a, b, c, e\}   \tag{Top}\\
\bot = \emptyset   \tag{Bottom}\\
\mathcal{G} = \{\{b, e\}, \{b, c\}, \{a\}, \{e\}\}  \tag{Granulation}
\end{align*}

Define the predicate $\pc$, and the operations $\vee, \wedge, l, u$ as follows (for any $x, w\in \underline{\mathbb{S}} $):
\begin{multline*} \pc = \{(\bot, x), (x,\top), (\{a, b\},\{a, b, e\}),\\(\{b, e\},\{b, e, c\}),(\{a\}, \{a,b\}), (\{b,c\}, \{b,c,e\}),(\{x\}, \{x\}),(\{e\}, \{b,e\})\}\end{multline*}
\begin{align}
(\forall x, w\in \underline{\mathbb{S}\setminus\{\bot\}}) x\vee w := x\cup w \text{ if defined}   \tag{wag}\\
(\forall x, w\in \underline{\mathbb{S}\setminus\{\bot\}}) x \wedge w :=  x \cap w \text{ if defined}   \tag{wco}\\
x^l := \bigcup \{z: \, z\in \mathcal{G}\, \&\,\pc z x \}   \tag{lower}\\
x^u := \bigcup \{z: \, z\in \mathcal{G}\, \&\,z\cap x \neq \emptyset \}   \tag{upper}
\end{align}

The definition of the upper approximation is actually external to the partial algebra (as set intersection is not defined in it), but the result is a weak aggregation ($vee$) of granules. 
$\leq$ is representable as in Figure \ref{leqr}. If $w\leq x$ for some $x$ and $w$, then an arrow is drawn from $x$ to $w$. Note that $\{b, e\} \wedge \{b, c\}$ is not defined (for example), and there are no arrows to $\bot$.
\begin{center}
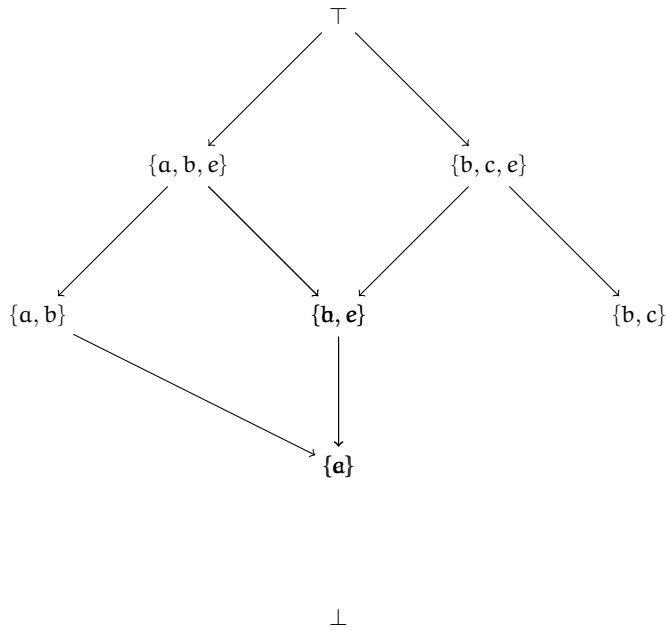
\begin{figure}[hbt]
\begin{tikzpicture}[node distance=2cm, auto]
\node (T) {$\top$};
\node (O) [below of=T] {};
\node (A3) [left of=O] {$\{a, b, e\}$};
\node (B3) [right of=O] {$\{b,c, e\}$};
\node (H) [below of=A3] {};
\node (A21) [left of=H] {$\{a, b\}$};
\node (A22) [right of=H] {$\{a, e\}$};
\node (G) [below of=B3] {};
\node (B21) [left of=G] {$\{b, e\}$};
\node (B22) [right of=G] {$\{b, c\}$};
\node (A1) [below of=A22] {$\{a\}$};
\node (E1) [below of=B21] {$\{e\}$};
\node (F) [below of=E1] {$\bot$};
\draw[->,font=\scriptsize] (T) to node {}(A3);
\draw[->,font=\scriptsize] (T) to node {}(B3);
\draw[->,font=\scriptsize] (A3) to node {}(A21);
\draw[->,font=\scriptsize] (A3) to node {}(A22);
\draw[->,font=\scriptsize] (A3) to node {}(B21);
\draw[->,font=\scriptsize] (B3) to node {}(B21);
\draw[->,font=\scriptsize] (B3) to node {}(B22);
\draw[->,font=\scriptsize] (A21) to node {}(A1);
\draw[->,font=\scriptsize] (A22) to node {}(A1);
\draw[->,font=\scriptsize] (A22) to node {}(E1);
\draw[->,font=\scriptsize] (B21) to node {}(E1);
\end{tikzpicture}
\caption{The $\leq$ Relation}
\label{leqr}
\end{figure}
\end{center}

Note that no arrow leads to $\bot$. The essential diagram (omitting $\bot$ and $\top$) for parthood $\pc$ is in Figure \ref{partr}. If $\pc xw$ for some $x$ and $w$, then an arrow is drawn from $x$ to $w$.

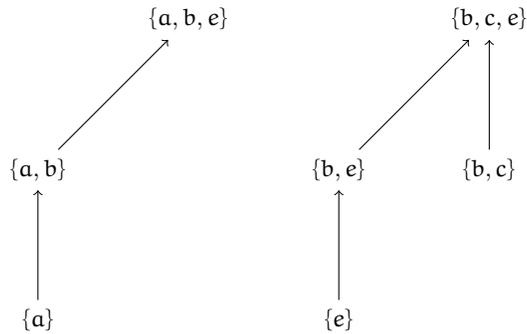
\begin{figure}[hbt]
\begin{center}
\begin{tikzpicture}[node distance=2cm, auto]
\node (T) {};
\node (A3) [left of=T] {$\{a, b, e\}$};
\node (B3) [right of=T] {$\{b,c, e\}$};
\node (H) [below of=A3] {};
\node (A21) [left of=H] {$\{a, b\}$};
\node (B21) [right of=H] {$\{b, e\}$};
\node (B22) [right of=B21] {$\{b, c\}$};
\node (A1) [below of=A21] {$\{a\}$};
\node (E1) [below of=B21] {$\{e\}$};
\draw[<-,font=\scriptsize] (A3) to node {}(A21);
\draw[<-,font=\scriptsize] (A21) to node {}(A1);
\draw[<-,font=\scriptsize] (B3) to node {}(B21);
\draw[<-,font=\scriptsize] (B3) to node {}(B22);
\draw[<-,font=\scriptsize] (B21) to node {}(E1);
\end{tikzpicture}
\caption{The $\pc$ Relation}
\label{partr}
\end{center}
\end{figure}

The approximation table for the situation is computed in Table \ref{appr}

\begin{table}[hbt]
 \centering
\begin{tabular}{ccc}
\toprule
\textsf{$x$} & $x^l$ & $x^u$  \\
\midrule
$\{ a\}$  & $\{ a\}$ & $\{ a\}$ \\
\midrule
$\{ e\}$  & $\{e \}$ & $\{ b, e\}$ \\
\midrule
$\{a, b \}$  & $\{ a\}$ & $\{ a, b, c, e\}$ \\
\midrule
$\{b, c \}$  & $\{b, c \}$ & $\{b, c, e \}$ \\
\midrule
$\{b, e \}$  & $\{b, e \}$ & $\{b, c, e \}$ \\
\midrule
$\{a,b, e \}$  & $\{a, b, e \}$ & $\{a, b, c, e \}$ \\
\midrule
$\{b, c, e \}$  & $\{b, c, e \}$ & $\{b, c, e \}$ \\
\midrule
$\{a, b, c, e \}$  & $\{a, b, c, e \}$ & $\{a, b, c, e \}$ \\
\end{tabular}
\caption{The Approximations}\label{appr}
\end{table}

\subsection{Construction of a HGOS}
 
A \textsf{set HGOS} is intended to capture contexts where all objects are described by sets of attributes with related valuations (that is their properties). So objects can be associated with sets of properties (including labels possibly). A more explicit terminology for the concept, may be \emph{power set derived \textsf{HGOS}}(that captures the intent that subsets of the set of all properties are under consideration here). Admittedly, the construction or specification of such a power set is not necessary. In a \textsf{HGOS}, such set of sets of properties need not be the starting point.
 
The difference between a \textsf{HGOS} and a \textsf{set HGOS} at the practical level can be interpreted at different levels of complexity. Suppose that the properties associated with a familiar object like a cast iron frying pan are known to a person $X$, then it is possible to associate a set of properties with valuations that are sufficient to define it. If all objects in the context are definable to a \emph{sufficient level}, then it would be possible for $X$ to associate a \textsf{set HGOS} (provided the required aspects of approximation and order are specifiable). 

It may not be possible to associate a set of properties with the same frying pan in a number of scenarios. For example, another person may simply be able to assign a label to it, and be unsure about its composition or purpose. Still the person may be able to indicate that another frying pan is an approximation of the original frying pan. In this situation, it is more appropriate to regard the labeled frying pan as an element of a \textsf{HGOS}. 

A nominalist position together with a collectivization property can also lead to \textsf{HGOS} that is not a \textsf{set HGOS}.

\section{General Rough Inclusion Functions}

Intuitively, generalizations of rough inclusion functions are likely to work perfectly when 
\begin{description}
\item[A1]{the contribution of attributes to approximations have uniform weightage across approximations,}
\item[A2]{the contributions of attributes in the construction of approximation can be assigned weights,}
\item[A3]{the functions are robust and sensitive (that is the value of the function does not change much with small deviations of its arguments \cite{skajsd2016} and stable relative to the context),}
\item[A4]{every aggregate of attributes is meaningful, and}
\item[A5]{attributes are independent.}
\end{description}
The ideas of robustness and stability are always relative to a finite number of purposes or use cases in application contexts.

In this section, the different known rough inclusion functions are generalized to \textsf{GGS} of the form $\mathbb{S} \, =\, \left\langle \underline{\mathbb{S}}, \mathcal{G}, l , u, \pc, \vee,  \wedge, \bot, \top \right\rangle$. If $\kap : \underline{\mathbb{S}}^2 \longmapsto [0, 1]$ is a map, consider the conditions,
\begin{align*}
(\forall a)\, \kap (a, a) = 1    \tag{U1}\\
(\forall a, b)(\kap (a, b) = 1 \leftrightarrow   \pc a b) \tag{R1}\\
(\forall a, b, c)(\kap (b, c) = 1 \longrightarrow \kap (a, b) \leq \kap (a, c))   \tag{R2}\\
(\forall a, b, c)(\pc bc \longrightarrow \kap (a, b) \leq \kap (a, c))   \tag{R3}\\
(\forall a, b)(\pc ab \longrightarrow \kap (a, b) = 1)   \tag{R0}\\ 
(\forall a, b)(\kap (a, b) = 1\longrightarrow \pc ab )   \tag{IR0}\\ 
(\forall a) (\pp \bot a\longrightarrow  \kap (a, \bot ) = 0)   \tag{RB}\\
(\forall a, b)(\kap ( a, b)=0 \longrightarrow a \wedge b = \bot)  \tag{R4}\\
(\forall a, b)( a\wedge b = \bot \,\&\, \pp \bot a \longrightarrow \kap (a, b) = 0)   \tag{IR4}\\ 
(\forall a, b)(\kap ( a, b)=0 \,\&\, \pp \bot a  \leftrightarrow a \wedge b = \bot)  \tag{R5}\\
(\forall a, b, c)( \pp \bot a \, \&\, b\vee c = \top\longrightarrow  \kap (a , b) + \kap (a, c) =1)  \tag{R6}
\end{align*}

These are analogous extensions of the definition in \cite{ag2009,ag3}. $rif_3$ is RB, and $rif_{2*}$ is R2 in a set HGOS under the conditions mentioned. Proofs of the next theorem for set HGOS can be deduced from those for wqRIFs in \cite{ag2009}. These carry over to \textsf{HGOS} directly, while the proofs in a \textsf{GS} are not hard.

\begin{theorem}
The following implications between the properties are easy to verify.
\begin{description}
\item[prif1]{If $\mathbb{S}$ satisfies R1, then R3 and R2 are equivalent.}
\item[prif2]{R1 if and only if R0 and IR0 are satisfied.}
\item[prif3]{R0 and R2 imply R3.}
\item[prif4]{IR0 and R3 imply R2.}
\item[prif5]{IR4 implies RB.}
\item[prif6]{IR4 and R4 if and only if R5.}
\item[prif7]{When complementation is well defined in a set HGOS then R0 and R6 imply IR4.}
\item[prif8]{When complementation is well defined in a set HGOS then IR0 and R6 imply R4.}
\item[prif9]{When complementation is well defined in a set HGOS then R1 and R6 imply R5.}
\end{description}
Further both R1 and R0 imply U1.
\end{theorem}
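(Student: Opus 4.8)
The plan is to treat the statement as a web of elementary implications and to prove them in an order that lets later parts reuse earlier ones, isolating the purely propositional manipulations from the few places where the lattice and parthood axioms of the \textsf{GGS} actually do work. First I would dispose of \textbf{prif2}: since R1 is literally the biconditional $\kap(a,b)=1 \leftrightarrow \pc ab$, splitting it into its two one-directional halves gives exactly R0 and IR0, and conversely their conjunction rebuilds the biconditional, so no axioms of $\mathbb{S}$ are needed. The closing remark that R0 (and hence R1) implies U1 is then immediate by instantiating $b=a$ and invoking reflexivity of parthood (PT1): $\pc aa$ forces $\kap(a,a)=1$.

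Next I would handle the R2/R3 cluster. For \textbf{prif3} I would take $\pc bc$, use R0 to upgrade it to $\kap(b,c)=1$, and then feed this into R2 to obtain $\kap(a,b)\le\kap(a,c)$, which is R3. For \textbf{prif4} I would run the same chain in reverse: from $\kap(b,c)=1$ use IR0 to recover $\pc bc$ and then apply R3. Once these are in hand, \textbf{prif1} is a corollary, since under R1 both R0 and IR0 are available (by prif2), so prif3 gives R2$\Rightarrow$R3 and prif4 gives R3$\Rightarrow$R2, establishing the equivalence.

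For the boundary conditions I would argue as follows. \textbf{prif5} follows by instantiating IR4 at $b=\bot$: since $\bot$ is the least element of the weak lattice, $a\wedge\bot=\bot$ holds automatically, so the hypothesis $a\wedge b=\bot$ is vacuously met and IR4 collapses to exactly RB. For \textbf{prif6} the key observation is that R4 and IR4 are precisely the two directions of the characterization R5 read under the standing proviso $\pp\bot a$; the one subtlety, which I would flag explicitly, is the degenerate case $a=\bot$, where PT2 together with TB forces $\neg\pp\bot a \Leftrightarrow a=\bot$, and there $a\wedge b=\bot$ holds trivially so R4 and R5 impose nothing beyond what already holds.

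The set-\textsf{HGOS} items are where the real content lies, and I expect them to be the main obstacle, since they are the only places that use complementation and the set-specific identity of parthood with $\subseteq$. For \textbf{prif7} I would, given $a\wedge b=\bot$ and $\pp\bot a$, pass to the complement $b^{c}$: then $a\le b^{c}$, hence $\pc a b^{c}$, so R0 yields $\kap(a,b^{c})=1$; since $b\vee b^{c}=\top$, applying R6 with $c:=b^{c}$ gives $\kap(a,b)+\kap(a,b^{c})=1$ and therefore $\kap(a,b)=0$, which is IR4. For \textbf{prif8} I would reverse this: assuming $\kap(a,b)=0$, in the nondegenerate case $\pp\bot a$ I use R6 (again with $c:=b^{c}$) to get $\kap(a,b^{c})=1$, then IR0 to get $a\le b^{c}$, whence $a\wedge b=\bot$, while the case $a=\bot$ is trivial. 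Finally \textbf{prif9} is assembled from the pieces: R1 supplies R0 and IR0 (prif2), prif7 and prif8 then deliver IR4 and R4, and prif6 converts these into R5. The delicate points throughout are the correct handling of the $a=\bot$ boundary, where proper parthood fails, and verifying that complementation in a set \textsf{HGOS} interacts with the additivity R6 exactly as the finite set-cardinality arguments of \cite{ag2009} do.
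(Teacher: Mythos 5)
Your proposal is correct and, on the items the paper actually argues (prif1, prif3, prif5), it follows the same route: split R1 into R0 and IR0, chain R0/IR0 through R2/R3 to interchange premises, and instantiate IR4 at $b=\bot$. Where you add value is that the paper dismisses prif2 and prif6 as obvious and gives no argument at all for prif4, prif7, prif8, prif9, or the final claim about U1; your complementation argument for prif7--prif9 (from $a\wedge b=\bot$ in a set HGOS infer $\pc a\,b^{c}$, apply R0 or IR0 together with R6 at $c:=b^{c}$ using $b\vee b^{c}=\top$) is exactly the right transplant of the finite-set reasoning of \cite{ag2009}, and your explicit treatment of the $a=\bot$ case in prif6 and prif8 is more careful than the source. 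Two shared caveats, neither of which is your fault: the step from IR4 to RB tacitly assumes $a\wedge\bot=\bot$ is defined (the paper makes the same assumption, even though $\wedge$ is only a partial operation in a GGS), and R5 read literally fails at $a=\bot$ since its right-to-left direction would demand $\pp\bot\bot$; your ``standing proviso $\pp\bot a$'' reading is the only one under which prif6 is true, and it is evidently the intended one.
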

\begin{proof}
Aspects of the proof are illustrated below
\begin{description}
\item[prif1]{Suppose $\pc bc$ for some $b, c\in \mathbb{S}$, then by R1 $\kap (b, c) =1$ and conversely. In R2 and R3 the premise can be interchanged in the conditional implication when R1 holds.}
\item[prif2]{Obvious.}
\item[prif3]{Suppose R0 and R2 hold. If $\pc bc$ for some $b, c\in \mathbb{S}$ then by R0 $\kap (b, c) =1$. So for any $a$ $\kap (a, b) \leq \kap (a, c)$. That is R3 follows from R0 and R2}
\item[prif5]{Substituting $\bot$ for $b$ in IR4 yields RB.}
\item[pref6]{Is obvious.}
\end{description}
\end{proof}

\begin{definition}
By a \emph{general rough inclusion} function (RIF) on a \textsf{GGS} $\mathbb{S}$ shall be a map $\kap : (\mathbb{S})^2 \longmapsto [0, 1]$ that satisfies R1 and R2.
A \emph{general quasi rough inclusion} function (qRIF) will be a map $\kap : (\mathbb{S})^2 \longmapsto [0, 1]$ that satisfies R0 and R2. While a \emph{general weak quasi rough inclusion} function (wqRIF) will be a map $\kap : (\mathbb{S})^2 \longmapsto [0, 1]$ that satisfies R0 and R3. 
\end{definition}

\begin{proposition}
In a \textsf{GS} $\mathbb{S}$, every \textsf{RIF} is a \textsf{qRIF} and every \textsf{qRIF} is a \textsf{wqRIF}.  
\end{proposition}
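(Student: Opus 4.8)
The plan is to read off both inclusions directly from the implications collected in the Theorem, since a \textsf{GS} is in particular a \textsf{GGS} and all of those implications are stated at that level of generality. Recall that a \textsf{RIF} is a map $\kap$ satisfying R1 and R2, a \textsf{qRIF} is one satisfying R0 and R2, and a \textsf{wqRIF} is one satisfying R0 and R3. Thus the proposition reduces to two short chains of implications among the defining conditions, and I would verify each in turn.

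For the first inclusion I would start from a \textsf{RIF}, which by definition satisfies R1 and R2. The only ingredient missing for it to be a \textsf{qRIF} is R0. Here I would invoke \textbf{prif2}, which asserts that R1 holds if and only if both R0 and IR0 hold; in particular R1 yields R0. (Equivalently, R0 is just one direction of the biconditional R1, namely $\pc ab \longrightarrow \kap(a,b)=1$, so no real work is needed.) Hence R0 and R2 both hold, and $\kap$ is a \textsf{qRIF}.

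For the second inclusion I would start from a \textsf{qRIF}, which satisfies R0 and R2. To conclude that it is a \textsf{wqRIF} I need R0 and R3; R0 is already present, so it remains only to obtain R3. This is exactly the content of \textbf{prif3}, which states that R0 and R2 imply R3: given $\pc bc$, R0 yields $\kap(b,c)=1$, and then R2 (with $b$ and $c$ playing the role of its hypothesis) gives $\kap(a,b)\le\kap(a,c)$ for every $a$, which is R3. Hence $\kap$ is a \textsf{wqRIF}.

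The main point to guard against, rather than a genuine obstacle, is the temptation to believe that the \textsf{GS} hypothesis does real work here. It does not: both \textbf{prif2} and \textbf{prif3} are purely propositional consequences of the defining conditions on $\kap$ and $\pc$, valid in any \textsf{GGS}, so they apply verbatim in a \textsf{GS}. The \textsf{GS} setting merely fixes parthood as $\pc ab \Leftrightarrow a \leq b$, which is irrelevant to these two deductions. Accordingly I expect the finished argument to be a two-line appeal to the Theorem, requiring no case analysis or computation.
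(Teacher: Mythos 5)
Your proof is correct and follows exactly the route the paper intends: the proposition is stated without an explicit proof precisely because it is an immediate consequence of \textbf{prif2} (R1 gives R0, so a RIF satisfies R0 and R2 and is a qRIF) and \textbf{prif3} (R0 and R2 give R3, so a qRIF is a wqRIF). Your observation that the \textsf{GS} hypothesis plays no essential role in these two deductions is also accurate.
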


\paragraph{Other Generalizations:}
The above generalizations can also be done relative to the $\leq$ order of a GGS (instead of $\pc$). This leads to a focus on partial aggregation and commonality operations, that may not satisfy nice properties relative to the approximations. This makes the alternative approach a relatively bad one.

\subsection{Specific Weak Quasi-RIFs}

RIFs and variants thereof are defined over power sets in \cite{ss2010,ag2009}. For rewriting them in the set HGOS way, it is necessary to assume that $\underline{\mathbb{S}} = \wp(\top)$, $\top$ being a finite set, $\bot = \emptyset$,  $\pc = \leq = \subseteq$, $\vee = \cup$ and $\wedge = \cap$. Specifically, the following functions have been studied in \cite{ag2009} and have been used to define concepts of approximation spaces.   

\begin{equation*}\label{k1}
\nu_1(A, B) = \left\lbrace  \begin{array}{ll}
 \dfrac{\# (B)}{\# (A\cup B)} & \text{if } A\cup B\neq \emptyset\\
 1 & \text{otherwise}\\
 \end{array} \right. \tag{K1}                                                                                                             
\end{equation*}

\begin{equation*}\label{k2}
\nu_2(A, B) =  \dfrac{\# (A^c \cup B)}{\# (\top)}  \tag{K2}                                                                                                             
\end{equation*}

If $0 \leq s < t \leq 1$, and $\nu : {\mathbb{S}}^2 \longmapsto [0, 1]$ is a \textsf{RIF}, then let $\nu_{s,t}^{\nu} : {\mathbb{S}}^2 \longmapsto [0, 1]$ be a function defined by 

\begin{equation*}\label{kst}
\nu^{\nu}_{s,t}(A, B) = \left\lbrace  \begin{array}{ll}
0 & \text{ if } \nu(A, B) \leq s \\
 \dfrac{\nu (A. B) - s}{t - s} & \text{ if } s < \nu(A, B) < t,\\
 1 & \text{ if } \nu(A, B ) \geq t\\
 \end{array} \right. \tag{Kst}                                                                                                             
\end{equation*}

\begin{proposition}{\cite{ag2009}} 
In general, $\nu^{\nu}_{s,t}$ is a weak quasi \textsf{RIF} and $\nu^{\nu}_{s,1}$ is a quasi \textsf{RIF}. 
\end{proposition}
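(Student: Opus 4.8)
The plan is to factor the construction through a single scalar function and reduce everything to its monotonicity. I would define $g_{s,t} : [0,1] \longmapsto [0,1]$ by $g_{s,t}(x) = 0$ for $x \leq s$, $g_{s,t}(x) = (x-s)/(t-s)$ for $s < x < t$, and $g_{s,t}(x) = 1$ for $x \geq t$, so that $\nu^{\nu}_{s,t}(A, B) = g_{s,t}(\nu(A, B))$ for all $A, B$. Two scalar facts drive the whole argument: first, $g_{s,t}$ is nondecreasing on $[0,1]$; second, $g_{s,t}(1) = 1$, because $t \leq 1$ forces $1 \geq t$. Both are immediate from the piecewise definition, so I would record them as a one-line lemma before addressing R0, R2, or R3.

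For R0 I would argue for arbitrary $s < t$. Since $\nu$ is a \textsf{RIF} it satisfies R1, hence R0 (by \textbf{prif2} of the preceding theorem, R1 gives R0 and IR0). Thus $\pc A B$ yields $\nu(A, B) = 1$, and $g_{s,t}(1) = 1$ gives $\nu^{\nu}_{s,t}(A, B) = 1$, which is exactly R0 for the rescaled function. This settles the R0 clause for both $\nu^{\nu}_{s,t}$ and the special case $\nu^{\nu}_{s,1}$ simultaneously.

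For R3, the remaining half of being a wqRIF, I would first upgrade $\nu$: satisfying R1 it satisfies R0, and satisfying R2 it then satisfies R3 by \textbf{prif3} (R0 and R2 imply R3). So $\pc B C$ gives $\nu(A, B) \leq \nu(A, C)$, and monotonicity of $g_{s,t}$ propagates the inequality to $\nu^{\nu}_{s,t}(A, B) \leq \nu^{\nu}_{s,t}(A, C)$. Together with R0 this shows $\nu^{\nu}_{s,t}$ is a wqRIF.

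The delicate point, and the reason only $\nu^{\nu}_{s,1}$ is claimed to be a qRIF, is R2. Here the hypothesis is $\nu^{\nu}_{s,t}(B, C) = 1$, i.e. $g_{s,t}(\nu(B, C)) = 1$, which for a general threshold $t < 1$ yields only $\nu(B, C) \geq t$ and not $\nu(B, C) = 1$; without the equality one cannot invoke R2 for $\nu$, and in fact R2 can fail for the rescaled function. When $t = 1$ this obstruction vanishes: since $\nu$ is $[0,1]$-valued, $g_{s,1}(x) = 1$ holds if and only if $x = 1$, so the premise $\nu^{\nu}_{s,1}(B, C) = 1$ is equivalent to $\nu(B, C) = 1$. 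Feeding this into R2 for $\nu$ gives $\nu(A, B) \leq \nu(A, C)$, and monotonicity of $g_{s,1}$ then gives $\nu^{\nu}_{s,1}(A, B) \leq \nu^{\nu}_{s,1}(A, C)$, which is R2. I expect this equivalence $g_{s,1}(x) = 1 \Leftrightarrow x = 1$ to be the crux of the proposition: it is precisely what separates the qRIF claim at $t = 1$ from the merely wqRIF claim for general $t$.
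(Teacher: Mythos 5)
Your argument is correct. The paper itself offers no proof of this proposition --- it is imported verbatim from \cite{ag2009} --- so there is nothing internal to compare against; your write-up in effect supplies the missing verification in the paper's own axiomatic vocabulary (R0, R2, R3 over a \textsf{GGS} with parthood $\pc$), which is a genuine addition. The factorization $\nu^{\nu}_{s,t} = g_{s,t}\circ\nu$ through a nondecreasing scalar map with $g_{s,t}(1)=1$ is the right decomposition: it makes R0 and R3 one-line consequences of the corresponding properties of $\nu$ (your appeals to \textbf{prif2} for R1 $\Rightarrow$ R0 and to \textbf{prif3} for R0, R2 $\Rightarrow$ R3 are both legitimate uses of the preceding theorem), and it correctly isolates the only delicate point, namely that the premise of R2 for the rescaled function recovers $\nu(B,C)=1$ exactly when $t=1$, since $g_{s,1}(x)=1$ iff $x=1$ on $[0,1]$. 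The one loose end is your parenthetical assertion that R2 \emph{can} fail for $t<1$; the proposition does not require this negative claim, but if you want it you should exhibit a $\nu$ and a triple $A,B,C$ with $t\leq\nu(B,C)<1$ and $\nu(A,B)>\nu(A,C)$ rather than assert it. As stated, everything you actually need for the two positive claims is proved.
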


\subsection{Soft Aids for Pedestrians-2}

In the context of the complicated problem of designing soft aids for pedestrian navigation of subsection \ref{pedn}, a number of learning methods can be used to suggest possible strategies. A pedestrian can be assumed to be in a truncated ellipsoid (with distinguished areas at the base) at any time. The best future position of the person can be expected to be selected from a finite number of possible positions. This can be decided on the basis of the value of weak rough inclusion of one ellipsoid on another provided relevant factors are taken into account. But the computation relative to the pedestrian should be through the action of $\sharp$ on the wqRIF to reduce contamination. While this is not a perfect proposal from the perspective of contamination, it may be useful in practice.

\subsection{Generalized RIFs with Other Measures}

RIFs have been related to a number of numeric measures such as quality of classification\cite{zpb} , variable precision rough sets \cite{zw,ss1,yec2017}, accuracy degree of approximation \cite{zpb}, degrees of closeness \cite{lp2011}, dependence degree of a set of attributes on another \cite{zpb}, dependency degree of a decision set with respect to an attribute set and others. Rough membership functions are usually not related to rough inclusion functions in the literature. In most of these cases, the RIFs involve possibly non-crisp and non-definite objects. Some of the connections are mentioned in \cite{jzd2003}. Here the granularity aspect is discussed in brief.

Variants of RIFs have also been used in reduct computation for contexts involving non granular rough approximations (see \cite{chen2014,wu2002,skg1994}). Limitations of mass and plausibility functions are also mentioned in the context.

\begin{theorem}
If $\mathbb{S}$ is a \textsf{set HGOS}, then 
\begin{itemize}
\item {The accuracy degree of approximation of an element $x$ is \[\alpha(x)  = \dfrac{\#(x^l)}{\#(x^u)} = \nu(x^u, x^l).\] }
\item {The classical rough inclusion degree $\nu(a, b)$ defined by Equation. \ref{rif0} is not a function of crisp objects. The degree of misclassification is $\mu(a, b) = 1 - \nu(a, b)$. It coincides with $\nu(a, b^c)$ whenever $\mathbb{S}$ is closed under complements. }
\item {Relative to a partition $\mathcal{S}$ satisfying $\bigcup \mathcal{S} = \bigcup \mathbb{S}$, or even relative to the granulation $\mathcal{G}$, it is possible to define generalized VPRS approximations of any $X\in \mathbb{S}$ for a pair of parameters $0 < \alpha \leq \beta < 1$ as follows:
\begin{align*}
X^{l_v} = \bigcup \{h: \, h\in \mathcal{G}\, \&\, \nu(h, X) > \beta\} \\
X^{u_v} = \bigcup \{h: \, h\in \mathcal{G}\, \&\, \nu(h, X) > \alpha\}
\end{align*}
These approximations clearly depend on granules and the original set.}
\end{itemize}
\end{theorem}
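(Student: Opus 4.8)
The three assertions are essentially independent, so the plan is to treat them in turn, regarding the first (the accuracy-degree identity) as the substantive one and the other two as verifications of an algebraic identity and of well-definedness respectively.

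For the accuracy-degree claim, I would first record the inclusion $x^l \subseteq x^u$ in the set HGOS. Since $\pc = \subseteq$ here and, in the granular form of the approximations, $x^l$ is the union of granules $z\in\mathcal{G}$ with $z\subseteq x$ while $x^u$ is the union of granules meeting $x$, every nonempty granule contributing to $x^l$ also meets $x$ and hence contributes to $x^u$; this yields $x^l \subseteq x^u$. With this inclusion we have $x^u \cap x^l = x^l$, so reading off Equation \ref{rif0} with $A = x^u$ and $B = x^l$ (and assuming $x^u \neq \emptyset$) gives $\nu(x^u, x^l) = \#(x^u \cap x^l)/\#(x^u) = \#(x^l)/\#(x^u) = \alpha(x)$. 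The degenerate case $x^u = \emptyset$, which forces $x^l = \emptyset$ and makes the ratio vacuous, would be dispatched separately via the convention in \ref{rif0}.

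For the misclassification claim, the conceptual half --- that $\nu$ is not a function of crisp objects --- follows by noting that $\nu(a,b)$ is computed from $\#(a)$ and $\#(a\cap b)$ of the objects $a,b$ themselves, which need not be crisp or definite (they may differ from their approximations), rather than from any crisp reduction of them. The quantitative half is a one-line cardinality identity: since $a = (a\cap b)\cup(a\cap b^c)$ is a disjoint union, additivity of $\#$ gives $\#(a\cap b^c) = \#(a) - \#(a\cap b)$, whence for $a\neq\emptyset$, $\nu(a, b^c) = \#(a\cap b^c)/\#(a) = 1 - \#(a\cap b)/\#(a) = 1 - \nu(a,b) = \mu(a,b)$, where $b^c$ is available precisely because $\mathbb{S}$ is closed under complements.

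The VPRS claim is essentially a definition, so the work is to confirm it is a sound one. I would check that both unions are legitimate elements obtained by finite aggregation of granules (valid since $\top$ is finite and the set HGOS is closed under $\cup$), and that $0 < \alpha \leq \beta < 1$ forces $X^{l_v} \subseteq X^{u_v}$: any granule $h$ with $\nu(h,X) > \beta$ also satisfies $\nu(h,X) > \alpha$, so the index set for the lower operator is contained in that for the upper, giving the inclusion between the resulting unions. The stated dependence on granules and on $X$ is then immediate from the explicit defining sets. The main obstacle is the inclusion $x^l \subseteq x^u$ underpinning the first part: it is routine in classical rough sets but here must be argued from the granular construction of a set HGOS (and hinges on granules being nonempty), together with a clean treatment of the boundary case $x^u = \emptyset$ so that the equality $\nu(x^u, x^l) = \alpha(x)$ is not asserted where $\alpha(x)$ itself is undefined.
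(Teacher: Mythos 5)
The paper offers no proof of this theorem at all --- it is stated and immediately followed by the remark that ``the latter definition is fixed next,'' so the three items are treated as definitional or as immediate consequences of the classical identities. Your reconstruction is therefore not competing with an argument in the paper, and on its own terms it is essentially sound: the reduction of $\nu(x^u,x^l)$ to $\#(x^l)/\#(x^u)$ via $x^u\cap x^l=x^l$, the disjoint-union computation giving $\nu(a,b^c)=1-\nu(a,b)$ for $a\neq\emptyset$, and the containment of index sets giving $X^{l_v}\subseteq X^{u_v}$ are all the right calculations, and you are right to isolate $x^l\subseteq x^u$ as the only step that genuinely needs an argument.

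Two cautions. First, your derivation of $x^l\subseteq x^u$ assumes the classical forms $x^l=\bigcup\{z\in\mathcal{G}: z\subseteq x\}$ and $x^u=\bigcup\{z\in\mathcal{G}: z\cap x\neq\emptyset\}$, but a \textsf{set HGOS} does not guarantee these: the admissibility condition \textsf{WRA} only requires that $x^l$ and $x^u$ be expressible as \emph{some} lattice term in granules, and the axioms \textsf{UL1}--\textsf{UL3} give $\pc x^l x$ and $\pc x^u x^{uu}$ but neither $\pc x\, x^u$ nor $\pc x^l x^u$. So the inclusion you need is an additional hypothesis (satisfied in the Pawlakian case the theorem clearly has in mind), not a consequence of the abstract axioms; your proof should either state it as a hypothesis or restrict to the classical approximations. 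Second, in the misclassification item the coincidence $\mu(a,b)=\nu(a,b^c)$ actually fails at $a=\emptyset$ under the convention of \eqref{rif0}, since $\mu(\emptyset,b)=0$ while $\nu(\emptyset,b^c)=1$; you compute only the case $a\neq\emptyset$ and should flag the exception explicitly. Neither point undermines the substance of your argument, but both are gaps relative to the literal statement.
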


The latter definition is fixed next:

\begin{definition}
Relative to the granulation $\mathcal{G}$, it is possible to define \emph{fixed generalized VPRS approximations} of any $X\in \mathbb{S}$ for a pair of parameters $0 < \alpha \leq \beta < 1$ as follows:
\begin{align*}
X^{l_v} = \bigcup \{h: \, h\in \mathcal{G}\, \&\, \nu(h, X^l) > \beta\} \\
X^{u_v} = \bigcup \{h: \, h\in \mathcal{G}\, \&\, \nu(h, X^l) > \alpha\}
\end{align*}
These approximations clearly depend on granules or approximations. 
\end{definition}

In the context of \textsf{set HGOS}, if generalized RIFs are used then they can be shown to generate few generalized or very different relationships. While it is possible to define a number of generalizations on the theme, the most interesting ones are those of relative comparison between different pairs of approximations.

\section{Algebraic Systems of wqRIFs}

Generalized RIFs of all types satisfy a number of algebraic properties (many of these have not been previously identified or studied in the literature). These are explored and the most useful are identified and characterized in this section. Contamination reduction can also be achieved to an extent through an algebraic perspective.

Let $wqRIF(\mathbb{S})$ be the set of all wqRIFs on $\mathbb{S}$. Consider the following set of definitions:

\begin{definition}\label{operwqr}
For any $f, g\in wqRIF(\mathbb{S})$, $a, b\in \mathbb{S}$, and $\alpha \in [0,1]$, define
\begin{align*}
(f\otimes g) (a, b):= f(a, b) \cdot g(a, b)   \tag{Product}\\
(f\oplus_{\alpha} g) (a, b):= \alpha f(a, b) + (1-\alpha) g(a, b) \tag{$\alpha$-Sum}\\
(\sharp f) (a, b):= f(a^l, b^l)  \tag{l-Decontamination}\\
(\flat f) (a, b):= f(a^u, b^u)  \tag{u-Decontamination} \\
f\preceq g \text{ if and only if } (\forall a, b) f(a, b)\leq f(a, b)   \tag{Order}
\end{align*}
\end{definition}

\begin{definition}
In a GGS $\mathbb{S}$, for any $f\in wqRIF(\mathbb{S})$ and $a, b\in \mathbb{S}$, define $\varsigma$ by
\begin{equation*}
(\varsigma f) (a, b):= \left\{
\begin{array}{ll}
\max \{f(w, b^l): \,w\in \mathcal{G} \,\&\, \pc wa\}  & \text{if } (\exists!^{\geq 1} w\in \mathcal{G}) \pc wa\\
 1 & \text{Otherwise } \\
\end{array}\right .
 \tag{Granular Sum} 
\end{equation*}

\end{definition}

\begin{theorem}
\begin{itemize}
\item {All the operations and predicates in Definition \ref{operwqr} and $\varsigma$ are well- defined over a GGS $\mathbb{S}$.}
\item {All the operations and predicates in Definition \ref{operwqr} are well-defined over an abstract approximation system $\mathbb{S}$.}
\end{itemize}
\end{theorem}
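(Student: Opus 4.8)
The plan is to read ``well-defined'' in the sense appropriate to partial algebras: for each symbol one must check that the prescribed recipe returns a determinate value (existence and single-valuedness) and that this value is a legitimate element of the intended carrier, i.e.\ a map $\underline{\mathbb{S}}^2 \longmapsto [0,1]$ (and, where the algebra of wqRIFs is actually being assembled, again a wqRIF). Since $f,g$, the operators $l,u$, the arithmetic operations, and $\max$ are all single-valued, determinacy never fails on its own; the real content is that the values stay in $[0,1]$, that the $\max$ in $\varsigma$ is genuinely attained, and---if closure into $wqRIF(\mathbb{S})$ is also intended---that R0 and R3 survive each construction.

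First I would dispatch $\otimes$, $\oplus_{\alpha}$, $\sharp$, $\flat$ and $\preceq$, all of which are routine. Range membership is immediate: a product of two numbers in $[0,1]$ lies in $[0,1]$; the quantity $\alpha f(a,b)+(1-\alpha)g(a,b)$ is a convex combination (as $\alpha\in[0,1]$) and so also lies in $[0,1]$; $\sharp f$ and $\flat f$ merely re-evaluate $f$ at $(a^l,b^l)$ and $(a^u,b^u)$, so their values already lie in the range of $f$; and $\preceq$ is the pointwise order on $[0,1]$-valued maps, visibly reflexive, antisymmetric and transitive. If closure is intended, R3 is uniform across these: from $\pc bc$ one gets $\pc b^l c^l$ and $\pc b^u c^u$ by \textsf{UL2}, and then the R3 property of $f$ together with the monotonicity of multiplication and of convex combination on $[0,1]$ yields the required inequalities; R0 follows the same route, since $\pc ab$ forces $\pc a^l b^l$ and $\pc a^u b^u$ by \textsf{UL2}, whence $f$ returns $1$ by its own R0.

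The substantive case is $\varsigma$. Here I would first argue that the collection $\{w\in\mathcal{G}:\ \pc wa\}$ indexing the maximum is finite, so that $\max\{f(w,b^l):\ w\in\mathcal{G},\ \pc wa\}$ is actually attained and $(\varsigma f)(a,b)$ is determinate; when this index set is empty the ``otherwise'' clause supplies the value $1$, so $\varsigma f$ is everywhere defined and, being a maximum of numbers in $[0,1]$ or the constant $1$, takes values in $[0,1]$. Monotonicity in the second argument is clean because the index set depends only on $a$: for $\pc bc$ one has $\pc b^l c^l$ by \textsf{UL2}, hence $f(w,b^l)\leq f(w,c^l)$ for every admissible $w$ by R3 of $f$, and taking the maximum over the common index set preserves the inequality.

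I expect the R0 direction for $\varsigma f$ to be the main obstacle. One must show that $\pc ab$ forces the maximum to equal $1$, i.e.\ that some granule $w$ with $\pc wa$ also satisfies $f(w,b^l)=1$, for which it suffices (by R0 of $f$) to produce a granule $w$ with $\pc wa$ and $\pc w b^l$. Because parthood in a \textsf{GGS} is only assumed reflexive and antisymmetric, one cannot simply chain $\pc wa$ with $\pc ab$ to get $\pc wb$; instead I would invoke lower stability \textsf{LS} (which, with \textsf{UL1}, also gives $w^l=w$ for granules) together with whatever generalized transitivity is available in the ambient \textsf{GGS} to relay a suitable granule of $a$ into $b^l$. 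This is the one place where the granulation axioms, rather than pure arithmetic, are doing the work. Finally, for the second item I would observe that an abstract approximation system carries the operators $l,u$ and the parthood $\pc$ but no distinguished granulation $\mathcal{G}$; consequently $\otimes$, $\oplus_{\alpha}$, $\sharp$, $\flat$ and $\preceq$ are verified exactly as above (range membership is automatic, and the R0/R3 clauses again follow from the available \textsf{UL2}-analogue), while $\varsigma$ is simply not expressible there and is correctly omitted from the claim.
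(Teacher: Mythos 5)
Your treatment of $\otimes$, $\oplus_{\alpha}$, $\sharp$, $\flat$ and $\preceq$ coincides with the paper's: range membership in $[0,1]$ is automatic, R0 for $\sharp$ and $\flat$ comes from \textsf{UL2} plus R0 of $f$, and R3 comes from \textsf{UL2} plus monotonicity of products and convex combinations. Your handling of the second bullet (drop $\varsigma$, since an abstract approximation system carries no granulation, and observe that the remaining verifications never used $\mathcal{G}$) is also exactly the paper's one-line remark. You additionally worry about whether the $\max$ in $\varsigma$ is attained; the paper does not address this, so that is a point in your favour rather than a deviation.

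The one place you do not actually close the argument is the R0 clause for $\varsigma$. You correctly isolate what is needed --- a granule $w$ with $\pc wa$ must satisfy $\pc w\,b^l$ so that $f(w,b^l)=1$ --- and you correctly observe that reflexivity and antisymmetry of $\pc$ do not let you chain $\pc wa$ with $\pc ab$. But you then defer to ``whatever generalized transitivity is available in the ambient \textsf{GGS}'', and in a bare \textsf{GGS} no transitivity is postulated, so as written the step is a plan rather than a proof. The paper closes this step by asserting directly that $\pc ta$ and $\pc ab$ yield $\pc t\,b^l$ in a \textsf{GGS} (in effect: \textsf{LS} gives $\pc t\,a^l$, \textsf{UL2} gives $\pc a^l\,b^l$, and the two are then chained); if you want your proof to stand on its own you must either supply that chaining explicitly as an additional hypothesis on $\pc$ or restrict the claim to the case where it holds. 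Your R3 verification for $\varsigma$ (fixed index set depending only on the first argument, $\pc a^l b^l$ from \textsf{UL2}, then $\max$ preserves pointwise inequalities) matches the paper's.
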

\begin{proof}
\begin{itemize}
\item {It is necessary to prove that $f\otimes h$ is in $wqRIF(\mathbb{S})$ for any $f, h\in wqRIF(\mathbb{S})$. If for any $a, b\in \mathbb{S}$ $\pc ab$ holds, then $(f\otimes h)(a, b) = f(a, b) \cdot h(a, b) = 1$. This verifies R0. Again for the same $a, b$ and a $c\in \mathbb{S}$, $f(a, c)\leq f(b,c)$ and $h(a,c)\leq h(b,c)$ hold. So $(f(a,c) \cdot h(a,c))\leq (f(b, c)\cdot h(b, c))$ follows. This means R3 is satisfied by $f\otimes h$.  }
\item {Let $\alpha \in [0, 1]$, $f, h\in wqRIF(\mathbb{S})$, and $a, b\in \mathbb{S}$ be such that $\pc ab$ holds. Then $(f\oplus_{\alpha} h)(a, b) = \alpha f(a, b) + (1-\alpha) h(a, b) = \alpha +(1 - \alpha) = 1 $. This verifies R0. Again for the same $a, b$ and a $c\in \mathbb{S}$, $f(a, c)\leq f(b,c)$ and $h(a,c)\leq h(b,c)$ hold. From this $\alpha f(a,c)\leq \alpha f(b, c)$ and $(1 - \alpha) h(a, c)\leq (1 - \alpha) h(b, c)$ follows. Adding the two inequalities yields  $\alpha f(a, c) + (1 - \alpha) h(a, c)\leq \alpha f(b, c) + \leq (1 - \alpha) h(b. c)$. That is $ f\oplus_{\alpha} h (a, c) \leq f\oplus_{\alpha} h (b, c) $.}
\item {For any $a, b\in \mathbb{S}$ $\pc ab$ implies $\pc a^l b^l$. Therefore, $\pc a b $ implies $\sharp f (a, b) = f(a^l, b^l) = 1$. This verifies R0. Again for the same $a$ and $b$, and a $c\in \mathbb{S}$, $f(a, c)\leq f(b, c)$. $\sharp f (a, c) = f(a^l, c^l) \leq f(b^l, c^l)$ holds because $\pc a^l b^l$ follows from $\pc ab$. This completes the verification of R3.}
\item {The well-definedness of $\flat$ is similar to that of $\sharp$. }
\item {The relation $\preceq$ is obviously a subset of $(wqRIF(\mathbb{S}))^2$.}
\item {For verifying R0 and R3 for $\varsigma $, note that if $\pc ab$ for any $a, b\in \mathbb{S}$ and $\pc t a$ for a granule $t$, then $\pc t b^l$ holds in a GGS. Therefore $\pc ab$ implies $(\varsigma f)(a, b) =1 $. Further if $c\in \mathbb{S}$, and $\pc t c$ for a granule $t$ and $\pc z a^l $, implies $\pc z b^l$. This yields the inequality $(\varsigma f)(c, a) \leq (\varsigma f) (c, b)$.}
\end{itemize}

All parts of the above proof except for the portion relating to $\varsigma$ apply to abstract approximation systems.
  
\end{proof}

\begin{proposition}
The operation $\top : (\mathbb{S})^2 \longmapsto [0, 1]$ defined as below is a weak quasiRIF:
\[(\forall a, b) \top (a, b) = 1\] 
\end{proposition}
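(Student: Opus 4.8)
The plan is to verify directly that the constant map $\top(a,b) = 1$ satisfies the two defining conditions of a weak quasi rough inclusion function, namely R0 and R3, as given in the definition of wqRIF. Since $\top$ assigns the value $1$ to every pair, both conditions should reduce to trivial observations, so the proof is essentially a sanity check confirming that the constant-$1$ function lies in $wqRIF(\mathbb{S})$.

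First I would check R0, which states $(\forall a, b)(\pc ab \longrightarrow \kap (a, b) = 1)$. For $\top$, the conclusion $\top(a,b) = 1$ holds for \emph{every} pair $(a,b)$ by definition, so in particular it holds whenever $\pc ab$; the implication is therefore vacuously respected (its consequent is always true). Next I would verify R3, which states $(\forall a, b, c)(\pc bc \longrightarrow \kap (a, b) \leq \kap (a, c))$. Here $\top(a,b) = 1 = \top(a,c)$, so the inequality $\top(a,b) \leq \top(a,c)$ becomes $1 \leq 1$, which holds independently of whether $\pc bc$. Hence R3 is satisfied.

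Since $\top$ clearly maps into $[0,1]$ (its only value is $1$), it is a legitimate map of the required type, and having established R0 and R3, it qualifies as a weak quasi rough inclusion function by the definition given in the excerpt. I do not anticipate any genuine obstacle: the only thing worth a brief remark is the mild notational overloading, since $\top$ is already used in the \textsf{GGS} signature for the top element, whereas here it denotes the constant unit wqRIF; this is a cosmetic point rather than a mathematical difficulty. The interest of the statement lies not in its difficulty but in its role, as $\top$ will presumably serve as a distinguished (greatest) element with respect to the order $\preceq$ and as an identity or absorbing element for the algebraic operations introduced in Definition \ref{operwqr}, which is why it is singled out here.
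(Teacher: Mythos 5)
Your proof is correct: the paper omits a proof of this proposition entirely (treating it as immediate), and your direct verification of R0 and R3 for the constant-$1$ map is exactly the routine check that is being left to the reader. Nothing further is needed.
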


\begin{definition}\label{wqrifa}
By a $\mathbb{A}$-\emph{wqRIF algebra} $W$ over a GGS $\mathbb{S}$ will be an algebraic system of the form  
(with $\underline{W} = wqRIF(\mathbb{S})$, and $\alpha \in \mathbb{A} \subseteq Q\cap [0, 1]$ - the set of rationals in $[0, 1]$)
\[W = \left\langle \underline{W}, \preceq, \otimes, \{\oplus_{\alpha}\}, \flat, \sharp, \varsigma, \top   \right\rangle\] with the operations, predicates and distinguished elements defined as above. If $\mathbb{A} = Q\cap [0, 1]$, then the algebra will be referred to as a \emph{wqRIF algebra}.
\end{definition}

\begin{proposition}
If $\mathbb{S}$ is a finite GGS, then any $f\in wqRIF(\mathbb{S})$ satisfies 
\[\Im (f) \subset [0, b] \cup \{1\}  \] for a fixed $b <1$ ($\Im(f)$ being the image of $f$).
\end{proposition}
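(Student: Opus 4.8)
The plan is to exploit the finiteness of $\mathbb{S}$ directly, since the statement is essentially a finiteness argument rather than one that relies on the wqRIF axioms R0 and R3 in any deep way. First I would observe that, because $\underline{\mathbb{S}}$ is finite, the domain $\mathbb{S}^2$ of $f$ is also finite, and hence the image $\Im(f) = \{f(a,b):\, a, b\in \mathbb{S}\}$ is a finite subset of the interval $[0,1]$.

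Next I would partition $\Im(f)$ into two parts: the values equal to $1$ and the values strictly less than $1$. The second part, call it $\Im(f)\setminus\{1\}$, is a finite set of reals each lying in $[0,1)$. A finite nonempty set of reals attains its maximum, so I would set $b = \max(\Im(f)\setminus\{1\})$; since every element of this set is below $1$ and the set is finite, the maximum $b$ is itself strictly below $1$. Every value of $f$ is then either equal to $1$ or at most $b$, which gives $\Im(f)\subseteq [0,b]\cup\{1\}$ as required.

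The only case needing separate mention is when $f$ is identically $1$ on $\mathbb{S}^2$ (which can happen, for instance for the constant wqRIF $\top$), so that $\Im(f)\setminus\{1\}$ is empty; then any choice of $b<1$ works, say $b=0$, and the inclusion $\Im(f)\subseteq\{1\}\subseteq[0,b]\cup\{1\}$ holds trivially. I would fold this into the argument as a degenerate subcase.

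I do not anticipate a genuine obstacle: the axioms R0 and R3 play no role here beyond guaranteeing that $f$ is a well-defined map, and the mathematical content reduces to the fact that a finite set of reals all lying below $1$ has a maximum that is still below $1$. The one subtlety worth flagging is that $b$ must be read as depending on the particular $f$ rather than being a single bound uniform over all of $wqRIF(\mathbb{S})$; a uniform bound would fail, since a wqRIF may take values arbitrarily close to (but below) $1$ on pairs $a,b$ with $\neg\,\pc ab$.
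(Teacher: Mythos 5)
Your proof is correct: the paper itself only records that ``the proof of the result is obvious,'' and the finiteness argument you give --- a finite image in $[0,1]$, with $b$ taken as the maximum of the values strictly below $1$ (or $b=0$ in the degenerate case) --- is precisely the intended reasoning. Your closing remarks, that the bound $b$ depends on the particular $f$ and that the wqRIF axioms are not otherwise used, are accurate and worth keeping.
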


\begin{proof}
The proof of the result is obvious. 
\end{proof}

\begin{theorem}\label{wqalg}
In a wqRIF algebra over a GGS (as in Def. \ref{wqrifa}) all of the following  hold:
\begin{align*}
 (\forall f, h)\, f \otimes h = h\otimes f \tag{Comm}\\
 (\forall f, h, t)\, f\otimes (h\otimes t) = (f \otimes h)\otimes t \tag{Assoc} \\
 (\forall f) \, f \otimes \top = f \tag{Identity} \\
 (\forall f) f \oplus_\alpha f = f \tag{Idempotence}\\
(\forall f, t, h \in W)(\forall \alpha\in [0, 1])\, f \otimes (t \oplus_{\alpha} h) = (f \otimes t) \oplus_{\alpha} \tag{Distributivity}
 \end{align*}
\end{theorem}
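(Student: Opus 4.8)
The plan is to exploit the fact that $\otimes$, $\oplus_{\alpha}$ and $\top$ are all defined \emph{pointwise} on function values lying in $[0,1]$. Consequently two wqRIFs are equal exactly when their values agree at every pair $(a,b)\in\mathbb{S}^2$, and each of the five asserted identities reduces to a purely arithmetical identity in the real interval $[0,1]$ evaluated at an arbitrary fixed $(a,b)$. Before carrying this out I would first record that every expression occurring on either side of each identity is genuinely a wqRIF: this closure is exactly what the preceding well-definedness theorem guarantees for $\otimes$ and $\oplus_{\alpha}$, together with the proposition that the constant map $\top(a,b)=1$ is a wqRIF. Thus both sides of every equation are honest elements of $wqRIF(\mathbb{S})$, and it remains only to compare their values.

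Next I would dispatch the identities one at a time, fixing an arbitrary $(a,b)$ and abbreviating $p=f(a,b)$, $q=h(a,b)$, $r=t(a,b)$. Commutativity and associativity of $\otimes$ follow immediately from $p\cdot q=q\cdot p$ and $p\cdot(q\cdot r)=(p\cdot q)\cdot r$ in $\mathbb{R}$. For the identity law, $\top(a,b)=1$ gives $(f\otimes\top)(a,b)=p\cdot 1=p=f(a,b)$. Idempotence of $\oplus_{\alpha}$ is the observation that an affine combination of a value with itself returns that value: $(f\oplus_{\alpha}f)(a,b)=\alpha p+(1-\alpha)p=p$. Finally, distributivity unfolds as $(f\otimes(t\oplus_{\alpha}h))(a,b)=p\,(\alpha r+(1-\alpha)q)=\alpha\,(p\cdot r)+(1-\alpha)\,(p\cdot q)=((f\otimes t)\oplus_{\alpha}(f\otimes h))(a,b)$, which is just the distributive law of multiplication over addition in $\mathbb{R}$ applied at $(a,b)$.

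The content of the theorem is therefore a transfer principle: the commutative-monoid-with-distributivity structure of $\langle[0,1],\cdot,+\rangle$ lifts coordinatewise to the function space, and there is no genuine combinatorial or order-theoretic obstacle to overcome. The only points deserving care are bookkeeping ones --- confirming that every intermediate value stays within $[0,1]$ so that the pointwise operations remain defined (handled by the cited closure result), and reading the distributivity identity with its suppressed right-hand summand $(f\otimes h)$ restored. I expect the mild subtlety, if any, to be treating the convex-combination parameter $\alpha$ as a fixed scalar throughout, so that $\oplus_{\alpha}$ genuinely distributes; once $\alpha$ is fixed the computation is the elementary one above. These five identities are precisely the facts needed downstream to recognise the wqRIF algebra as carrying the asserted hemiring structure.
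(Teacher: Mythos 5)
Your proposal is correct and follows essentially the same route as the paper: both reduce each identity to a pointwise arithmetical fact about real numbers in $[0,1]$ at an arbitrary pair $(a,b)$, relying on the earlier well-definedness theorem for closure of $\otimes$ and $\oplus_{\alpha}$ in $wqRIF(\mathbb{S})$. Your explicit restoration of the suppressed summand $(f\otimes h)$ in the distributivity identity matches what the paper's own proof actually establishes.
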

\begin{proof}
\begin{itemize}
\item {For any $a, b\in \mathbb{S}$ and $f, h\in W$, $(f\otimes h)(a, b) = f(a, b)\cdot h(a, b)$. But $f(a, b)\cdot h(a, b) = h(a, b) \cdot f(a, b) = (h\otimes f)(a, b)$. So commutativity holds. }
\item {For any $a, b\in \mathbb{S}$ and $f, h, t\in W$, $(f\otimes (h\otimes t))(a, b) = f(a, b)\cdot (h(a, b)\cdot t(a, b)) = (f(a, b)\cdot h(a, b))\cdot t(a, b)$ $= ((f\otimes h)\otimes t)(a, b)$.}
\item {For any $a, b\in \mathbb{S}$ and $f \in W$, $(f\otimes \top)(a, b) = f(a, b) \cdot 1  = f(a, b)$. So $f\otimes \top = f$.}
\item {Idempotence for any $alpha$ can be verified directly.}
\item {For any $a, b\in \mathbb{S}$, $\alpha \in [0, 1]$, and $f, h, t\in W$, 
\begin{itemize}
\item {$f\otimes (t\oplus_{\alpha} h)(a, b) = f(a, b)(t(a, b) \oplus_{\alpha} h(a, b) ) =$}
\item {$= f(a, b) (\alpha t(a, b) + (1-\alpha)h(a, b)) =$ }
\item {$= \alpha (f(a, b) t(a, b)) + (1 - \alpha) (f(a, b)h(a, b))$ $ = \alpha (f\otimes t)(a, b) + (1-\alpha) (f\otimes h)(a, b)$.}
\end{itemize}
Clearly that is $((f \otimes t) \oplus_{\alpha} (f\otimes h))(a, b)$. This proves the distributive property.}
\end{itemize}
 
\end{proof}

\begin{theorem}
In a wqRIF algebra over a GGS (as in Def. \ref{wqrifa}) all of the following  hold:
\begin{align*}
 (\forall f, h, f', h')(f\preceq h\, \&\, f'\preceq g' \longrightarrow f\otimes f' \preceq h \otimes h') \tag{Order-1} \\
 (\forall f, h, f', h')(\forall \alpha)(f\preceq h\, \&\, f'\preceq h' \longrightarrow f\oplus_{\alpha} f' \preceq h \oplus_{\alpha} h') \tag{Order-2} \\
(\forall f) \, f \preceq \top = f \tag{Top}\\
(\forall f\in W)(\forall a, b)(\pc aa^l \longrightarrow (\sharp f)(a, b) \leq f(a, b) )   \tag{weak sharp comp}\\
(\forall f\in W)(\forall a, b) (\pc a^ua \longrightarrow  f(a, b) \leq (\flat f)(a, b))   \tag{weak flat comp}\\
(\forall f\in W)(\forall a, b) (\pc ab \longrightarrow  \varsigma f (a, b) =1)  \tag{R0+} 
\end{align*}
\end{theorem}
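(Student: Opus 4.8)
The plan is to verify the six assertions pointwise, drawing on the defining conditions R0 and R3 of a wqRIF, the GGS axioms (notably UL1, UL2, PT2, LS) and the monotonicity of arithmetic on $[0,1]$.

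First I would dispatch the two order-compatibility statements and \emph{Top} by elementary monotonicity. For \emph{Order-1}, fix $a, b \in \mathbb{S}$; from $f \preceq h$ and $f' \preceq h'$ one has $f(a,b) \le h(a,b)$ and $f'(a,b) \le h'(a,b)$ with all four values in $[0,1]$, and since multiplication of nonnegative reals is monotone in each factor, $f(a,b)\cdot f'(a,b) \le h(a,b)\cdot h'(a,b)$, i.e. $(f \otimes f')(a,b) \le (h \otimes h')(a,b)$; as $a,b$ are arbitrary this gives $f \otimes f' \preceq h \otimes h'$ (the $g'$ in the statement being a typo for $h'$). \emph{Order-2} is identical with the convex combination $\alpha(\cdot) + (1-\alpha)(\cdot)$ in place of the product, using $\alpha, 1-\alpha \ge 0$. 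For \emph{Top}, read as $f \preceq \top$: since $\top(a,b) = 1 \ge f(a,b)$ for every $a,b$ (the range of any wqRIF being $[0,1]$), the constant map $\top$ is the greatest element of $\langle \underline{W}, \preceq \rangle$.

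For \emph{weak sharp comp} the key first move is to convert the hypothesis into an equality of arguments. Since UL1 supplies $\pc a^l a$ for every $a$, the hypothesis $\pc a a^l$ together with antisymmetry PT2 forces $a = a^l$. Hence $(\sharp f)(a,b) = f(a^l, b^l) = f(a, b^l)$. Now UL1 also gives $\pc b^l b$, so R3 (monotonicity in the second argument) with first argument $a$ yields $f(a, b^l) \le f(a, b)$. Chaining the equality and the inequality gives $(\sharp f)(a,b) \le f(a,b)$, as required.

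For \emph{weak flat comp} I would run the dual argument, and this is where I expect the real obstacle. The natural dual wants $a = a^u$ (to neutralize the uncontrolled first slot) and $\pc b b^u$ (to feed R3 in the second slot), whence $f(a,b) \le f(a, b^u) = f(a^u, b^u) = (\flat f)(a,b)$. However, the GGS axioms give only $\pc a^l a$ and $\pc a^u a^{uu}$, \emph{not} the extensivity relations $\pc a a^u$ and $\pc b b^u$; consequently the hypothesis $\pc a^u a$ does not by itself collapse $a$ to $a^u$ (PT2 would additionally require $\pc a a^u$). The crux is therefore to supply these upper-extensivity comparisons: under $\pc a a^u$ the hypothesis yields $a = a^u$, and under $\pc b b^u$ the condition R3 delivers the inequality. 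I would either record these as the ambient conditions under which the statement is intended (upper-extensive $a$ and $b$) or derive them from the specific granulation in use; establishing them in full generality is the one step that does not follow from the listed axioms alone.

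Finally, \emph{R0+} is exactly the R0 condition for the granular operator $\varsigma$, which was already verified in the well-definedness theorem for $\varsigma$, so I would cite it. Briefly: if no granule $w$ satisfies $\pc wa$ then $(\varsigma f)(a,b) = 1$ by definition; otherwise, given $\pc ab$ and a granule $w$ with $\pc wa$, one obtains $\pc w b^l$ in a GGS (via Lower Stability together with UL2 and the granular transitivity available there), so R0 forces $f(w, b^l) = 1$ and the defining maximum equals $1$. In either case $\varsigma f(a,b) = 1$.
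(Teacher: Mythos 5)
Your treatment of Order-1, Order-2 and Top coincides with the paper's own proof, which likewise argues pointwise from the monotonicity of multiplication and of convex combination on $[0,1]$ and from $f(a,b)\in[0,1]$. For the remaining three items the paper offers only the sentence ``the other parts follow directly,'' so your arguments supply content the paper omits. Your proof of weak sharp comp is complete and uses the right mechanism: the hypothesis $\pc a a^l$ together with $\pc a^l a$ (from UL1) and antisymmetry (PT2) collapses $a=a^l$, after which $\pc b^l b$ and R3 give $f(a^l,b^l)=f(a,b^l)\le f(a,b)$; this detour through an equality is genuinely necessary because a wqRIF is monotone only in its second argument, so the first slot cannot be handled by R3. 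Your diagnosis of weak flat comp is also correct and points at a defect in the statement rather than in your argument: UL1 asserts $\pc a^l a$ and $\pc a^u a^{uu}$ but not the upper extensivity $\pc a a^u$, so the hypothesis $\pc a^u a$ does not force $a=a^u$ via PT2, and without $\pc b b^u$ the condition R3 cannot be applied in the second slot either; the inequality $f(a,b)\le f(a^u,b^u)$ is derivable only under the additional assumption that $u$ is extensive (as it is in the concrete set-based models), and that hypothesis ought to be recorded explicitly. For R0+ your citation of the well-definedness theorem for $\varsigma$ is appropriate, though note that the step from $\pc w a$ and $\pc a b$ to $\pc w b^l$ used there silently invokes a transitivity-like property of $\pc$ that the paper elsewhere declines to assume; that weakness is inherited from the paper, not introduced by you.
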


\begin{proof}
\begin{itemize}
\item {Order-1: For any $a, b\in \mathbb{S}$, and $f, h, f', h'$ if $f \preceq h\, \&\, f'\preceq h'$, then $f(a, b) \leq h(a, b)$ and $f'(a, b) \leq h'(a, b)$ follow. This yields $f(a, b)\cdot f'(a, b) \leq h(a, b)\cdot h'(a, b)$ and therefore $ f\otimes f' \preceq h \otimes h'$. }
\item {Order-2: For any $a, b\in \mathbb{S}$, and $f, h, f', h'$ if $f \preceq h\, \&\, f'\preceq h'$, then $f(a, b) \leq h(a, b)$ and $f'(a, b) \leq h'(a, b)$ follow. Again for any $\alpha\in [0, 1]$, it follows that $\alpha f(a, b) \leq \alpha h(a, b)$ and $(1-\alpha)f'(a, b) \leq (1-\alpha)h'(a, b)$. This yields\\ $\alpha f(a, b) + (1-\alpha)f'(a, b) \leq \alpha h(a, b) +  (1-\alpha)h'(a, b)$. This proves $ f\oplus_{\alpha} f' \preceq h \oplus_{\alpha} h')$.}
\item {For any $a, b\in \mathbb{S}$ and $f\in W$, $f(a, b)\in [0, 1]$. So $f\preceq \top$ follows.}
\end{itemize}
The other parts follow directly.
 
\end{proof}

\begin{remark}
Clearly, wqRIF algebras are  ordered hemirings with additional operators as $\otimes$ distributes over $\oplus_{\alpha}$. Because of the inconsistent terminology used in the literature, some readers may want to replace \emph{hemiring} with \emph{semiring} in the last sentence.
\end{remark}

\subsection{Algebraic Systems of RIFs}

The first thing to be noted is that if a general RIF $\kap$ is defined on a GGS $\mathbb{S}$, then the parthood $\pc$ would be definable by it. The natural question is then: Would all operations definable on the set of all wqRIF on a GGS be definable on the set of all RIFs? 

Let $RIF(\mathbb{S})$ be the set of all RIFs on $\mathbb{S}$.

\begin{theorem}
\begin{itemize}
\item {The operations $\otimes$, and predicates in Definition \ref{operwqr} are well- defined in $RIF(\mathbb{S})$ (over a GGS $\mathbb{S}$) when $wqRIF(\mathbb{S})$ is uniformly replaced by $RIF(\mathbb{S})$.}
\item {The operations $\otimes$, and predicates in Definition \ref{operwqr} are well-defined in $RIF(\mathbb{S})$ over an abstract approximation system $\mathbb{S}$ when $wqRIF(\mathbb{S})$ is uniformly replaced by $RIF(\mathbb{S})$.}
\end{itemize}
\end{theorem}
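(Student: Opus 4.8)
The plan is to unwind the definition of $RIF(\mathbb{S})$: a RIF is exactly a map $[0,1]$-valued on $\mathbb{S}^2$ satisfying R1 and R2. Hence proving the first bullet reduces to two closure claims — that $f\otimes h \in RIF(\mathbb{S})$ whenever $f, h\in RIF(\mathbb{S})$, and that $\preceq$ restricts to a relation on $(RIF(\mathbb{S}))^2$. The latter needs no real argument: $\preceq$ is defined pointwise from $\leq$ on $[0,1]$, so it is automatically a subset of $(RIF(\mathbb{S}))^2$, exactly as in the wqRIF theorem. So the whole substance lies in checking that $\otimes$ transports R1 and R2. I would also note at the outset that the second bullet will come for free, because the argument below uses only R1, R2 and the order and product structure of $[0,1]$ — never the granulation $\mathcal{G}$, the conditions WRA/LS/FU, nor the operators $l, u$ — and so applies verbatim over an abstract approximation system.

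For R1 I would lean on the elementary fact that for $x, y\in [0,1]$ one has $xy = 1$ if and only if $x = 1$ and $y = 1$. Thus $(f\otimes h)(a,b) = f(a,b)\cdot h(a,b) = 1$ iff $f(a,b) = 1$ and $h(a,b) = 1$, and applying R1 to $f$ and to $h$ rewrites this as $\pc ab$ holding (redundantly) on both sides. This secures the biconditional R1 for $f\otimes h$ in both directions simultaneously, which is the crucial improvement over the wqRIF argument, where only the forward implication R0 was at stake.

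For R2 I would start from the premise $(f\otimes h)(b,c) = 1$; by the same product-equals-one observation this forces $f(b,c) = 1$ and $h(b,c) = 1$. Applying R2 to each factor gives $f(a,b)\leq f(a,c)$ and $h(a,b)\leq h(a,c)$, and since all four values are non-negative, multiplying the two inequalities termwise yields $f(a,b)\,h(a,b)\leq f(a,c)\,h(a,c)$, i.e. $(f\otimes h)(a,b)\leq (f\otimes h)(a,c)$. That is R2 for $f\otimes h$, completing the closure of $\otimes$ on $RIF(\mathbb{S})$; the abstract-system bullet follows by repeating this verbatim.

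The main point to foreground — and what makes this theorem the intended ``negative'' counterpart to the wqRIF results — is explaining why the list stops at $\otimes$ and $\preceq$ rather than inheriting all of Definition \ref{operwqr}. The obstruction is that R1 is a biconditional whose converse direction, IR0, is fragile under the approximation-based operators. For $\sharp$ one computes $(\sharp f)(a,b) = f(a^l, b^l) = 1$ iff $\pc a^l b^l$ by R1 for $f$; but UL2 supplies only $\pc ab \Rightarrow \pc a^l b^l$ and not its reverse, so $(\sharp f)(a,b)=1$ need not entail $\pc ab$ and R1 fails, leaving $\sharp f$ merely a wqRIF. The same failure of IR0 afflicts $\flat$ and $\varsigma$, since each evaluates $f$ at approximated or granule-filtered arguments whose parthood does not pull back to $\pc ab$. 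I would state this contrast explicitly, as it is precisely the phenomenon the surrounding discussion is driving at.
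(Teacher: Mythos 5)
Your proposal is correct and follows essentially the same route as the paper's own proof: closure of $\otimes$ under R1 via the fact that a product of values in $[0,1]$ equals $1$ exactly when both factors do, the termwise multiplication of inequalities for the monotonicity condition, the trivial observation for $\preceq$, and the diagnosis that the remaining operations fail because the converse direction IR0 of R1 does not survive evaluation at approximated arguments. The only cosmetic difference is that you verify R2 directly where the paper verifies R3 and invokes their equivalence under R1 (\textbf{prif1}), which changes nothing of substance.
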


\begin{proof}
The proof consists in checking that the operations are closed in $RIF(\mathbb{S})$. The failure of the other operations in a wqRIF algebra can also be traced. Counterexamples are easy.

\begin{itemize}
\item {It is necessary to prove that $f\otimes h$ is in $RIF(\mathbb{S})$ for any $f, h\in RIF(\mathbb{S})$. If for any $a, b\in \mathbb{S}$ $\pc ab$ holds, then $(f\otimes h)(a, b) = f(a, b) \cdot h(a, b) = 1$. Because of the possible values of $f$ and $h$, the converse holds as well This verifies R1. Again for the same $a, b$ and a $c\in \mathbb{S}$, $f(a, c)\leq f(b,c)$ and $h(a,c)\leq h(b,c)$ hold. So $(f(a,c) \cdot h(a,c))\leq (f(b, c)\cdot h(b, c))$ follows. This means R2 and R3 is satisfied by $f\otimes h$.  }
\item {Let $\alpha \in [0, 1]$, $f, h\in RIF(\mathbb{S})$, and $a, b\in \mathbb{S}$ be such that $\pc ab$ holds. Then $(f\oplus_{\alpha} h)(a, b) = \alpha f(a, b) + (1-\alpha) h(a, b) = \alpha +(1 - \alpha) = 1 $. This verifies R0. \emph{The converse is not provable in general. So R1 fails}. Again for the same $a, b$ and a $c\in \mathbb{S}$, $f(a, c)\leq f(b,c)$ and $h(a,c)\leq h(b,c)$ hold. From this $\alpha f(a,c)\leq \alpha f(b, c)$ and $(1 - \alpha) h(a, c)\leq (1 - \alpha) h(b, c)$ follows. Adding the two inequalities yields  $\alpha f(a, c) + (1 - \alpha) h(a, c)\leq \alpha f(b, c) + \leq (1 - \alpha) h(b. c)$. That is $ f\oplus_{\alpha} h (a, c) \leq f\oplus_{\alpha} h (b, c) $. R3 follows from this.}
\item {For any $a, b\in \mathbb{S}$ $\pc ab$ implies $\pc a^l b^l$. Therefore, $\pc a b $ implies $\sharp f (a, b) = f(a^l, b^l) = 1$. This verifies R0. Conversely, if $\sharp f (a, b) =1$, then $f(a^l, b^l) =1$. \emph{This means $\pc a^l b^l$ and it does not follow that $\pc ab$, and R1 can fail}. Again for the same $a$ and $b$, and a $c\in \mathbb{S}$, $f(a, c)\leq f(b, c)$. $\sharp f (a, c) = f(a^l, c^l) \leq f(b^l, c^l)$ holds because $\pc a^l b^l$ follows from $\pc ab$. This completes the verification of R3.}
\item {The well-definedness of $\flat$ fails in the same way as $\sharp$. }
\item {The relation $\preceq$ is obviously a subset of $(RIF(\mathbb{S}))^2$.}
\end{itemize}
\end{proof}

From the above, it can be shown that 

\begin{theorem}\label{rifalgm}
If $f\in RIF(\mathbb{S})$ and $h\in wqRIF(\mathbb{S})$, then $f\otimes h\in RIF(\mathbb{S})$.

$RIF(\mathbb{S} =\left\langle \underline{RIF(\mathbb{S}}, \otimes, \preceq, \right\rangle$ is a commutative partially ordered semigroup.
\end{theorem}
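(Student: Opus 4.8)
The plan is to prove the two assertions in turn: first that $f\otimes h\in RIF(\mathbb{S})$ whenever $f\in RIF(\mathbb{S})$ and $h\in wqRIF(\mathbb{S})$, and then, using this with $h$ itself a RIF, to obtain closure of $\otimes$ on $RIF(\mathbb{S})$ before checking the semigroup and order axioms. To establish the first part I would verify R1 and R2 for $f\otimes h$. For R1, suppose $\pc ab$: since $f$ is a RIF it satisfies R1 and hence R0 (by prif2), so $f(a,b)=1$; since $h$ is a wqRIF it satisfies R0, so $h(a,b)=1$; thus $(f\otimes h)(a,b)=f(a,b)\cdot h(a,b)=1$. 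For the converse — the direction separating a RIF from a mere wqRIF — suppose $(f\otimes h)(a,b)=f(a,b)\cdot h(a,b)=1$. Because both factors lie in $[0,1]$, a product equal to $1$ forces each factor to equal $1$, so in particular $f(a,b)=1$, whence R1 for $f$ gives $\pc ab$. This is the only place where the full bidirectional strength of R1 is used, and it rests squarely on the $[0,1]$-valuedness of the functions together with the hypothesis that $f$ (not merely $h$) is a RIF.

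Next I would verify R2 for $f\otimes h$. Assume $(f\otimes h)(b,c)=1$; as above this forces $f(b,c)=1$, and R1 for $f$ then yields $\pc bc$. Now R2 for $f$ (from $f(b,c)=1$) gives $f(a,b)\le f(a,c)$, while R3 for $h$ (from $\pc bc$) gives $h(a,b)\le h(a,c)$. Since all four quantities are nonnegative, multiplying the two inequalities yields $f(a,b)\cdot h(a,b)\le f(a,c)\cdot h(a,c)$, that is $(f\otimes h)(a,b)\le (f\otimes h)(a,c)$. This confirms R2, so $f\otimes h\in RIF(\mathbb{S})$.

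To obtain closure of $\otimes$ on $RIF(\mathbb{S})$ I would note that over a GGS every RIF is itself a wqRIF: R1 entails R0 (prif2), and since R1 holds, R2 and R3 are equivalent (prif1), so R3 holds; hence a RIF satisfies R0 and R3. Taking $h\in RIF(\mathbb{S})\subseteq wqRIF(\mathbb{S})$ in the first part then gives $f\otimes h\in RIF(\mathbb{S})$. Commutativity and associativity of $\otimes$ are inherited pointwise from those of real multiplication, exactly as in Theorem \ref{wqalg}, so $\langle \underline{RIF(\mathbb{S})},\otimes\rangle$ is a commutative semigroup. Finally $\preceq$ is a partial order: reflexivity and transitivity descend from those of $\le$ on $[0,1]$, and antisymmetry holds because $f\preceq g$ together with $g\preceq f$ forces $f(a,b)=g(a,b)$ for all $a,b$, i.e. $f=g$. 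Its compatibility with $\otimes$ — if $f\preceq g$ then $f\otimes h\preceq g\otimes h$, obtained by multiplying $f(a,b)\le g(a,b)$ through by the nonnegative $h(a,b)$ — is just the restriction of Order-1 to $RIF(\mathbb{S})$.

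The calculations are routine; the one genuinely load-bearing step is the converse direction of R1 for the mixed product, where it is essential that $f$ rather than $h$ carries the RIF strength. This asymmetry is precisely what collapses when one assumes only $f,h\in wqRIF(\mathbb{S})$, and it also explains the absence of $\top$ and of the $\oplus_{\alpha}$ family from the RIF signature: the constant-$1$ map fails R1 unless $\pc{}$ is trivial, and $\oplus_{\alpha}$ does not preserve the converse of R1, so the richer hemiring-with-operators structure of the wqRIF case degenerates here to a bare commutative partially ordered semigroup.
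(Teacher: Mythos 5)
Your proof is correct and follows essentially the same route as the paper: a pointwise verification of R1 and R2 for $f\otimes h$, with the converse direction of R1 secured by the observation that a product of values in $[0,1]$ equals $1$ only if both factors do. In fact you supply more detail than the paper, which proves closure only for $f,h$ both in $RIF(\mathbb{S})$ and leaves the mixed case ($h$ merely a wqRIF) as an assertion; your handling of that case, resting R1 on $f$ alone and combining R2 for $f$ with R3 for $h$, is exactly the intended argument.
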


The algebraic system $RIF(\mathbb{S} =\left\langle \underline{RIF(\mathbb{S}}, \otimes, \preceq, \top \right\rangle$ will be referred to as a RIF algebra.

\subsection{Problems}

\begin{flushleft}
\textbf{Problem-1} 
\end{flushleft}

Given a pair of wqRIFs in a problem context, it maybe of interest to find one or a finite number of $\alpha$(s) that provides an optimal way of obtaining a combined wqRIF. This will be useful in the following situation:
\begin{itemize}
\item {Algorithm $X$ recommends wqRIFs $f$ and $h$.}
\item {A finite number of inclusion values $v_1, \ldots v_n$are recommended on the basis of expert information.  }
\item {This yields the subproblem of finding a single $\alpha \in[0, 1]$ possibly for which 
the \emph{total} error in the values of $\alpha f \oplus (1-\alpha )h$ from $v_i$ at the associated points is a minimum. The idea of total error being sum of squared errors or something else.}
\end{itemize}

\begin{flushleft}
\textbf{Problem-2} 
\end{flushleft}

Find minimal generating sets of a $\mathbb{A}$-wqRIF algebra $W$ for different $\mathbb{A}$. This problem is relevant for model selection. 

The following theorems that follow by inductive arguments from the properties of a wqRIF algebra suggest some strategies for solving the problem(s)

\begin{theorem}
Let the continued $k$-times product of $f$ be $f^k = f\otimes f\otimes \ldots f$.
If $\alpha_i \in Q\cap [0, 1]$, and $f_i\in wqRIF(S)$ for $i\in\{1, 2, \ldots, n\leq \infty\}$, then $\sum_{i=1}^n \alpha_i f^{n_i}_i \in wqRIF(S)$ whenever $\sum_{i=1}^n \alpha_i = 1$.
\end{theorem}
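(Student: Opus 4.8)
The plan is to prove the statement by showing that a finite convex combination of products of wqRIFs is itself a wqRIF, and the natural route is to build it up from the closure properties already established for the operations $\otimes$ and $\oplus_\alpha$. First I would recall from Theorem~\ref{wqalg} that $wqRIF(\mathbb{S})$ is closed under $\otimes$, so that each power $f_i^{n_i}$ is again a wqRIF; this handles the products inside the sum. The remaining task is then to show that a convex combination $\sum_{i=1}^n \alpha_i g_i$ of wqRIFs $g_i := f_i^{n_i}$ with $\sum_{i=1}^n \alpha_i = 1$ is a wqRIF, i.e.\ that it satisfies R0 and R3.

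The cleanest approach is induction on $n$, reducing the general convex combination to the binary operation $\oplus_\alpha$, which is already known to preserve wqRIFs. For the base case $n=1$ the claim is immediate since $\alpha_1 = 1$ forces the sum to equal $g_1$. For the inductive step, given $\sum_{i=1}^{n} \alpha_i = 1$ with not all weight on the last term, I would set $\beta = \sum_{i=1}^{n-1}\alpha_i$ and, assuming $\beta \neq 0$, rewrite $\sum_{i=1}^n \alpha_i g_i = \beta\bigl(\sum_{i=1}^{n-1}\tfrac{\alpha_i}{\beta} g_i\bigr) + (1-\beta) g_n$. The inner sum has normalized weights summing to $1$, so by the induction hypothesis it is a wqRIF, say $h$; then the whole expression is exactly $h \oplus_{\beta} g_n$ in the notation of Definition~\ref{operwqr}, which is a wqRIF by the closure of $\oplus_\alpha$ established in the earlier theorem. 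The degenerate case $\beta = 0$ (all weight on $g_n$) is trivial.

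The main obstacle I anticipate is bookkeeping rather than conceptual: one must verify directly that the closure facts invoked for $\oplus_\alpha$ and $\otimes$ genuinely extend to the rational-weight and repeated-power setting, and in particular that R0 and R3 are stable under the normalization step. Concretely, R0 is easy because if $\pc ab$ then every factor and every summand evaluates to $1$ and any convex combination of $1$'s is $1$; the verification of R3 (monotonicity in the second argument under $\pc$) for the convex combination follows by adding the corresponding inequalities for the $g_i$ weighted by the nonnegative $\alpha_i$, exactly as in the proof that $\oplus_\alpha$ preserves wqRIFs. I would also note that the hypothesis $\sum_i \alpha_i = 1$ is essential precisely to preserve R0, since an arbitrary nonnegative combination need not map $(a,a)$ to $1$.
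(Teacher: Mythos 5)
Your proof is correct and follows essentially the same route as the paper, which merely asserts that the result ``follows by inductive arguments from the properties of a wqRIF algebra''; your normalization-and-induction argument reducing the convex combination to iterated applications of $\oplus_{\alpha}$, together with closure under $\otimes$ for the powers, is exactly that argument made explicit. (The only caveat is that the theorem's $n\leq\infty$ nominally admits infinite sums, which a finite induction does not reach, but the paper itself offers nothing for that case either.)
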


In other words, convex polynomials in one or more variables (wqRIFs) can represent a large number of wqRIFs. For every $f\in wqRIF(\mathbb{S})$, let $wqRIF_{f}(\mathbb{S})$ be the set of convex polynomials generated by $f$.

\begin{proposition}
For each $f\in wqRIF(\mathbb{S})$, and $\alpha\in[0, 1]$, $wqRIF_{f}(\mathbb{S})$ is closed under $\oplus_{\alpha}$ and $\otimes$. 
\end{proposition}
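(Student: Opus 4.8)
The plan is to work directly from the description of $wqRIF_{f}(\mathbb{S})$ as the set of convex polynomials in the single variable $f$, that is, the set of all expressions $\sum_{i=1}^{n}\alpha_i f^{n_i}$ with each $n_i$ a nonnegative integer (adopting the convention $f^{0}=\top$), each $\alpha_i\in Q\cap[0,1]$, and $\sum_{i=1}^{n}\alpha_i = 1$; by the preceding theorem every such expression already lies in $wqRIF(\mathbb{S})$. Closure then reduces almost entirely to verifying that $\oplus_{\alpha}$ and $\otimes$ preserve the two defining features of such an expression: that every monomial is a power of $f$, and that the coefficients are nonnegative rationals summing to $1$.

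First I would treat $\oplus_{\alpha}$. Given $p=\sum_{i}\alpha_i f^{n_i}$ and $q=\sum_{j}\beta_j f^{m_j}$ in $wqRIF_{f}(\mathbb{S})$, the $\alpha$-Sum gives, pointwise,
\[(p\oplus_{\alpha}q)(a,b)=\alpha\sum_{i}\alpha_i\,(f(a,b))^{n_i}+(1-\alpha)\sum_{j}\beta_j\,(f(a,b))^{m_j}.\]
This is again a convex combination of powers of $f$, with coefficients $\alpha\alpha_i$ and $(1-\alpha)\beta_j$; these are nonnegative and sum to $\alpha\sum_i\alpha_i+(1-\alpha)\sum_j\beta_j=\alpha+(1-\alpha)=1$, so $p\oplus_{\alpha}q\in wqRIF_{f}(\mathbb{S})$. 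To keep the coefficients rational I would take $\alpha\in Q\cap[0,1]$, in line with the definition of a wqRIF algebra; if an arbitrary real $\alpha$ is admitted, the same computation shows the result is a real-coefficient convex polynomial in $f$.

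Next I would treat $\otimes$. Using the commutativity and associativity of $\otimes$ from Theorem~\ref{wqalg}, one first records the law of exponents $f^{n}\otimes f^{m}=f^{n+m}$ together with $f\otimes\top=f$. The product $p\otimes q$ is then expanded by distributivity; since the algebra asserts only binary distributivity of $\otimes$ over $\oplus_{\alpha}$, I would either argue by induction on the number of summands or, more transparently, pass to the pointwise real values, where $\otimes$ and $\oplus_{\alpha}$ act as ordinary multiplication and weighted addition. This yields
\[(p\otimes q)(a,b)=\sum_{i,j}\alpha_i\beta_j\,(f(a,b))^{n_i+m_j},\]
a convex combination of powers of $f$ whose coefficients $\alpha_i\beta_j$ are nonnegative and sum to $(\sum_i\alpha_i)(\sum_j\beta_j)=1\cdot 1=1$. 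Hence $p\otimes q\in wqRIF_{f}(\mathbb{S})$.

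I expect the only genuine friction to be bookkeeping rather than mathematics: the algebra states distributivity only for the binary $\oplus_{\alpha}$, whereas a convex polynomial is an iterated multi-term convex combination, so the passage to the expanded double sum for $p\otimes q$ must be justified either by an explicit induction on the number of terms or by the observation that, evaluated at each fixed pair $(a,b)$, all operations reduce to arithmetic in $[0,1]$ and the claim becomes the elementary closure of one-variable convex polynomials under multiplication and convex combination. Once that reduction is in place, membership in $wqRIF(\mathbb{S})$ is inherited from the preceding theorem, and membership in $wqRIF_{f}(\mathbb{S})$ is immediate from the form of the resulting expression.
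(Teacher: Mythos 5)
Your proof is correct and follows the only natural route: the paper itself offers no explicit argument for this proposition (it is treated as immediate from the preceding theorem on convex polynomials), and your pointwise expansions of $\oplus_{\alpha}$ and $\otimes$ with the coefficient bookkeeping $\alpha\sum_i\alpha_i+(1-\alpha)\sum_j\beta_j=1$ and $\bigl(\sum_i\alpha_i\bigr)\bigl(\sum_j\beta_j\bigr)=1$ supply exactly the details the paper omits. Your caveat about an irrational $\alpha\in[0,1]$ versus the rational coefficients demanded by the preceding theorem is a genuine (if minor) mismatch in the statement itself, and restricting to $\alpha\in Q\cap[0,1]$, as you propose and as the definition of a wqRIF algebra already does, is the right repair; likewise your remark that the multi-term expansion of $p\otimes q$ is justified by passing to pointwise arithmetic in $[0,1]$ correctly disposes of the only formal friction with the binary distributivity law.
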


\begin{proposition}
If $f, h\in wqRIF(\mathbb{S})$ and $f\otimes h \in wqRIF_{f}(\mathbb{S})$, then $ wqRIF_{f\otimes h}(\mathbb{S}) \subseteq wqRIF_{f}(\mathbb{S})$.
\end{proposition}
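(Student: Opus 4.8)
The plan is to reduce the claim to the closure properties of $wqRIF_{f}(\mathbb{S})$ already recorded in the preceding proposition. Write $g = f\otimes h$, so that by hypothesis $g\in wqRIF_{f}(\mathbb{S})$. By definition, every element of $wqRIF_{g}(\mathbb{S})$ is a convex polynomial in the single generator $g$, that is an expression of the form $\sum_{i=1}^{n}\alpha_i g^{n_i}$ with $\alpha_i\in Q\cap[0,1]$, $\sum_{i=1}^{n}\alpha_i = 1$, and each $g^{n_i}$ an $\otimes$-power of $g$. So it suffices to show that every such expression already lies in $wqRIF_{f}(\mathbb{S})$; since $wqRIF_{f}(\mathbb{S})$ is closed under both $\otimes$ and each $\oplus_{\alpha}$, the generators of $wqRIF_{g}(\mathbb{S})$ and their convex combinations should stay inside $wqRIF_{f}(\mathbb{S})$.

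First I would handle the $\otimes$-powers. Because $g\in wqRIF_{f}(\mathbb{S})$ and $wqRIF_{f}(\mathbb{S})$ is closed under $\otimes$, an induction on $k$ gives $g^{k}\in wqRIF_{f}(\mathbb{S})$ for every positive integer $k$: the base case is $g^{1}=g$, and the inductive step uses $g^{k+1}=g^{k}\otimes g$ with both factors in $wqRIF_{f}(\mathbb{S})$. Hence each generator $g^{n_i}$ appearing in an element of $wqRIF_{g}(\mathbb{S})$ is itself a member of $wqRIF_{f}(\mathbb{S})$.

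Next I would pass from these powers to their convex combinations. The preceding proposition supplies closure only under the binary operations $\oplus_{\alpha}$, so I would upgrade this to closure under arbitrary finite rational convex combinations by peeling off one term at a time: writing $\sum_{i=1}^{n}\alpha_i x_i = x_1 \oplus_{\alpha_1}\bigl(\sum_{i=2}^{n}\tfrac{\alpha_i}{1-\alpha_1}x_i\bigr)$ when $\alpha_1<1$ (and trivially when $\alpha_1=1$), where the inner weights again sum to $1$ and remain rational. Applying this with $x_i = g^{n_i}$, each of which lies in $wqRIF_{f}(\mathbb{S})$ by the previous step, and invoking closure of $wqRIF_{f}(\mathbb{S})$ under $\oplus_{\alpha}$ repeatedly, shows that $\sum_{i=1}^{n}\alpha_i g^{n_i}\in wqRIF_{f}(\mathbb{S})$. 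As this exhausts $wqRIF_{g}(\mathbb{S})$, the inclusion $wqRIF_{g}(\mathbb{S})\subseteq wqRIF_{f}(\mathbb{S})$ follows.

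The only genuinely non-routine point, and the step I expect to need the most care, is the reduction of an $n$-ary convex combination to iterated binary $\oplus_{\alpha}$ operations, together with the verification that the renormalized coefficients $\tfrac{\alpha_i}{1-\alpha_1}$ remain in $Q\cap[0,1]$ and sum to $1$; the degenerate case $\alpha_1 = 1$ must be separated out. Everything else is a direct application of the closure clauses of the preceding proposition.
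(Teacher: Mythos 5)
Your argument is correct and follows essentially the same route as the paper, which simply observes that $wqRIF_{f}(\mathbb{S})$ is closed under $\oplus_{\alpha}$ and $\otimes$ and concludes from there. You have merely made explicit the two details the paper leaves implicit (the induction on $\otimes$-powers of $g=f\otimes h$ and the reduction of $n$-ary convex combinations to iterated binary $\oplus_{\alpha}$ with renormalized rational weights), and both are handled correctly.
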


\begin{proof}
The set of convex polynomials formed by $f$ is closed under the operations $\oplus_{\alpha}$ and $\otimes$. So the result follows.
\end{proof}

\begin{flushleft}
\textbf{Remarks} 
\end{flushleft}

In this research, concepts of high granular operator spaces and variants, contamination and data intrusion are explained. General rough inclusion functions are generalized to these frameworks and characterized. Of these generalized weak quasi rough inclusion functions are explored in depth and it is shown that they form hemirings with additional operations, while the algebras of generalized rough inclusion functions merely form ordered semigroups. These have potential applications to model selection, reducing contamination, and cluster validation from a rough perspective \cite{am2021c} among many other problems. In the last mentioned research, a rough framework for cluster validation is developed. The extent to which decontaminated measures can help in the methodology will be explored in a forthcoming paper. Interconnections between the two algebraic systems constructed in the present research lead to specific partial algebraic systems, and the extent to which these can be used to classify the generalized RIFs will also be part of future work.

\begin{flushleft}
\begin{small}
\textbf{Acknowledgement}: 
\end{small}
\end{flushleft}
This research is supported by a woman scientist grant of the department of science and technology.

\bibliographystyle{splncs.bst}
\bibliography{algroughf69flz}
\end{document}